\def\hyphenateAndTtWholeString #1{\xHyphenate#1$\wholeString\unskip}
\def\xHyphenate#1#2\wholeString {\if#1$%
    \else\transform{#1}%
    \takeTheRest#2\ofTheString\fi}
\def\takeTheRest#1\ofTheString\fi
\def\transform#1{\url{#1}\hskip 0pt plus 1pt}
\def\urlx #1{\href{#1}{\hyphenateAndTtWholeString{#1}}}
\newtheorem{defn}{Definition}
\newtheorem{rem}[defn]{Remark}
\newtheorem{lem}[defn]{Lemma}
\newtheorem{assum}[defn]{Assumption}
\newtheorem{thm}[defn]{Theorem}
\providecommand{\riskmethodname}{\text{RADIUS}}
\newcommand{\new}[1]{{\color{Plum}{#1}}}
\newcommand{\stkout}[1]{{\color{Orange}\ifmmode\text{\sout{\ensuremath{#1}}}\else\sout{#1}\fi}}
\newcommand{\zonocg}[2]{ \text{\textless} #1,\; #2\text{\textgreater}}
\providecommand{\Int}{\texttt{int}}
\providecommand{\zhi}{z^{\text{hi}}}
\providecommand{\zlo}{z^{\text{lo}}}
\providecommand{\dzhi}{\dot z^{\text{hi}}}
\providecommand{\dzlo}{\dot z^{\text{lo}}}
\providecommand{\vlo}{v^{\text{lo}}}
\providecommand{\rlo}{r^{\text{lo}}}
\providecommand{\diag}{\texttt{diag}}
\providecommand{\rot}{\texttt{rotate}}
\providecommand{\cost}{\texttt{cost}}
\providecommand{\Oego}{\mathcal O^\text{ego}}
\providecommand{\vegomax}{\nu^\text{ego}}
\providecommand{\vobsmax}{\nu^\text{obs}}
\providecommand{\tz}{t_0}
\providecommand{\tplan}{t_\text{plan}}
\providecommand{\tnb}{t_\text{m}}
\providecommand{\tf}{t_{f}}
\providecommand{\tm}{t_\text{m}}
\providecommand{\udes}{u^\text{des}}
\providecommand{\ubrk}{u^\text{brake}}
\providecommand{\hdes}{h^\text{des}}
\providecommand{\rdes}{r^\text{des}}
\providecommand{\fhi}{f^{\text{hi}}}
\providecommand{\flo}{f^{\text{lo}}}
\providecommand{\uc}{u^\text{cri}}
\providecommand{\opt}{(\texttt{Opt})}
\providecommand{\PP}{\mathcal{P}}
\providecommand{\W}{\mathcal{W}}
\providecommand{\Z}{\mathcal{Z}}
\renewcommand{\P}{\mathcal{P}}
\providecommand{\Z}{\mathcal{Z}}
\providecommand{\T}{\mathcal{T}}
\providecommand{\I}{\mathcal{I}}
\providecommand{\J}{\mathcal{J}}
\providecommand{\E}{\mathcal{E}}
\renewcommand{\SS}{\mathcal{S}}
\providecommand{\W}{\mathcal{W}}
\providecommand{\R}{\ensuremath \mathbb{R}}
\providecommand{\N}{\ensuremath \mathbb{N}}
\providecommand{\riskmethodname}{\text{Risk-RTD}}
\providecommand{\optE}{(\texttt{Opt-E})}
\providecommand{\prob}{\texttt{Prob}}
\providecommand{\wc}{w^\text{ctr}}
\providecommand{\T}{\mathcal{T}}
\providecommand{\bT}{\overline{\mathcal{S}}}
\providecommand{\Hess}{\texttt{Hess}}
\providecommand{\Gobs}{G^\text{obs}}
\providecommand{\w}{w^{\text{obs}}_{i,j}}
\providecommand{\Oobs}{\mathcal{O}^{\text{obs}}_i}
\providecommand{\wobs}{w^{\text{obs}}_{i,j}}
\providecommand{\pdf}{q_{i,j}}
\providecommand{\pdfT}{q_{i,j,\mathcal S}}
\providecommand{\sig}{\sigma_{i,j}}
\providecommand{\pdfmu}{\mu_{i,j}}
\providecommand{\idx}{\texttt{idx}}
\title{\LARGE \bf RADIUS: Risk-Aware, Real-Time, Reachability-Based Motion Planning}
\author{Jinsun Liu$^{*\dagger}$, Challen Enninful Adu$^{*\dagger}$, Lucas Lymburner$^\dagger$, Vishrut Kaushik$^{\dagger}$,\\
Lena Trang$^\ddagger$, and Ram Vasudevan$^\dagger$
\thanks{$^{*}$These authors contributed equally to this work.}%
\thanks{$^{\dagger}$Robotics, University of Michigan, Ann Arbor, MI. {\tt\small <jinsunl, enninful, llymburn, vishrutk, ramv>@umich.edu}.}%
\thanks{$^{\ddagger}$College of Engineering, University of Michigan, Ann Arbor, MI. {\tt\small ltrang@umich.edu}.}
}
\begin{document}

\setlength{\textfloatsep}{18pt}% decrease vertical space after figures
\maketitle
\thispagestyle{empty}
\pagestyle{plain}

%%%%%%%%%%%%%%%%%%%%%%%%%%%%%%%%%%%%%%%%%%%%%%%%%%%%%%%%%%%%%%%%%%%%%%%%%%%%%%%%
\begin{abstract}
Deterministic methods for motion planning guarantee safety amidst uncertainty in obstacle locations by trying to restrict the robot from operating in any possible location that an obstacle could be in.
Unfortunately, this can result in overly conservative behavior. 
Chance-constrained optimization can be applied to improve the performance of motion planning algorithms by allowing for a user-specified amount of bounded constraint violation. 
However, state-of-the-art methods rely either on moment-based inequalities, which can be overly conservative, or make it difficult to satisfy assumptions about the class of probability distributions used to model uncertainty. 
To address these challenges, this work proposes a real-time, risk-aware reachability-based motion planning framework called \riskmethodname{}. 
The method first generates a reachable set of parameterized trajectories for the robot offline. 
At run time, \riskmethodname{} computes a closed-form over-approximation of the risk of a collision with an obstacle.
This is done without restricting the probability distribution used to model uncertainty to a simple class (\emph{e.g.}, Gaussian).
Then, 
% based on a user-defined threshold for the allowable risk of collision of a motion plan, the 
\riskmethodname{} performs real-time optimization to construct a trajectory
that can be followed by the robot in a manner that is certified to have a risk of collision that is less than or equal to a user-specified threshold.
The proposed algorithm is compared to several state-of-the-art chance-constrained and deterministic methods in simulation, and is shown to consistently outperform them in a variety of driving scenarios. 
A demonstration of the proposed framework on hardware is also provided.
Readers can find the paper project page here\footnote{\urlx{https://roahmlab.github.io/RADIUS/}}. 
\end{abstract}

\section{Introduction}\label{sec:intro}
% \jinsun{I think we need to emphasize that this paper focus on mobile robot throughout the introduction, and mentioned that we use autonomous vehicle as an illustration after stating our contributions.}
For mobile robots to safely operate in unstructured environments, they must be able to sense their environments and develop plans to dynamically react to changes while avoiding obstacles. 
% Safe navigation through these uncertain environments in the presence of dynamic obstacles is a critical challenge. 
Unfortunately, it is challenging to accurately and precisely estimate and predict an obstacle's movement. 
For mobile robots to operate robustly, the uncertainty within these estimations and predictions must be accounted for while generating motion plans.
Various approaches to account for this uncertainty have been proposed in the literature, but these methods either (1) have difficulty being utilized in real-time, (2) make strong assumptions on the class of probability distributions used to model the uncertainty, or (3) generate motion plans that are overly conservative and thereby restrict motion.
This paper develops an algorithm called \textbf{\riskmethodname{}}: \textbf{R}isk-\textbf{A}ware, real-time trajectory \textbf{D}esign \textbf{I}n \textbf{U}ncertain \textbf{S}cenarios (introduced in Figure \ref{fig:RiskRTD-intro}) for \textbf{risk-aware, real-time motion planning} while addressing each of the aforementioned challenges.

We first summarize related algorithms for safe motion planning under uncertainty.
Approaches to address this problem can be categorized as deterministic or stochastic. 
The deterministic approaches often assume some bounded level of uncertainty and solve for motion plans that are robust to this bounded uncertainty. 
Reachability-based methods \cite{RTD}\cite{REFINE} for instance, over-approximate the uncertain regions using polynomial level sets or polytopes and plan paths that do not intersect with these regions. 
As we illustrate in this paper, such methods may be overly conservative as they must over-approximate the uncertain region to include events that may have an exceedingly small probability of occurring to ensure safety.

\begin{figure}[t]
    \centering
    \includegraphics[trim={0cm, 10.5cm, 16.4cm, 0cm},clip,width=1\columnwidth]{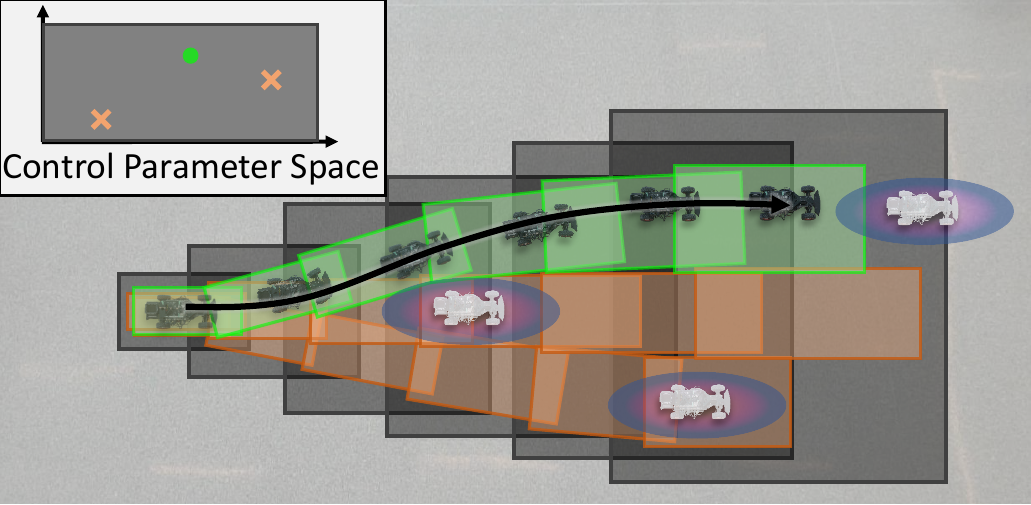}
    \caption{An illustration of the motion planning framework \riskmethodname{} that is developed in this paper. 
    \riskmethodname{} first performs offline reachability analysis using a closed-loop full-order vehicle dynamics to construct a series of control-parameterized, zonotope reachable sets (shown as dark gray boxes) that over-approximate all possible behaviors of the ego vehicle over the planning horizon. 
    % During online planning, Risk-RTD first determines the appropriate level of risk ($\epsilon{}$) based on the likelihood of serious injury should a crash occur at the current velocity. 
    During online planning, given some user-defined risk of collision ($\epsilon{}$) for the motion plan, \riskmethodname{} constructs a trajectory by solving an optimization problem that selects subsets of pre-computed zonotope reachable sets that are certified to have a no greater than $\epsilon$ risk of colliding with any obstacles. 
    In this figure, the moving obstacles are shown in white and the 3-$\sigma$ regions of the corresponding probability distributions for the obstacle locations are shown as the purple and blue ellipses where the probability density from low to high is illustrated from blue to purple.
    The subsets of the dark gray zonotope reachable sets corresponding to the trajectory parameter shown in green ensure a collision-free path that is guaranteed to have a no greater than $\epsilon$ risk of colliding with all obstacles, while the two trajectory parameters and their corresponding reachable sets shown orange may have a greater than $\epsilon{}$ risk of collision with the moving obstacles.}
    \label{fig:RiskRTD-intro}
    \vspace*{-0.5cm}
\end{figure}

Stochastic methods incorporate probabilistic information about the environment to reduce the level of conservativeness of their motion plans. 
Many of these methods leverage chance constraints that allow for some user-specified amount of constraint violation. 
For instance, sampling-based algorithms for chance-constrained motion planning apply numerical integration techniques to compute the risk of collision between the agent and obstacles in the environment \cite{janson2018monte,asmussen2007stochastic}. 
% In this context the chance constraint is expressed as an indicator function on the set of variables that violate the collision constraint. 
In this context, random samples are drawn from the uncertain region, and the number of samples that cause a collision is used to approximate the risk of collision. 
These methods are simple to implement, but can be time-consuming as they require large numbers of samples to converge to a good estimate of the risk of collision.
% Moreover, they are difficult to utilize in gradient-based optimization algorithms due to the non-smooth nature of the indicator function used as the integrand. 
To address these limitations, moment-based methods upper bound the risk of collision using moments of the probability distribution  \cite{wang2020,wang2fast,cantelli1929sui}. 
Recent works leverage MPC and SOS programming in conjunction with these moment-based upper bounds to perform motion planning. 
Unfortunately, these methods can be conservative as the bound must be valid for any distribution with the given moments.
Coherent risk measures like conditional value-at-risk (CVaR) and Entropic value-at-risk (EVaR) have been used to regulate safety by limiting the expectation of the distance to the safe region using a worst-case quantile representation of a distribution \cite{hakobyanCVAR, ahmadi2021risk, dixit2021risk}.
However, CVaR constraints are difficult to construct directly due to the computationally expensive multi-dimensional integration that is required.
As a result, they are often approximated using sampling methods, which as mentioned earlier can be computationally expensive.
% Moreover, developing real-time algorithms to solve optimization problems with CVaR constraints is challenging because they typically require performing mixed-integer programming \cite{hakobyanCVAR} \Ram{there are people who do mixed integer linear programming in real-time...so just because you are mixed integer doesn't make you bad.}. 
Branch-and-bound methods like the Chance-Constrained Parallel Bernstein Algorithm (CCPBA) use reachability-based methods in conjunction with a branch-and-bound style algorithm to compute tight bounds for the risk of collision \cite{SeanFast}. 
% However, to compute these probabilities they assume an \textit{a priori}, closed-form, formula for the cumulative distribution function that can be evaluated during online optimization. 
To compute these probabilities they assume that they have access to a cumulative distribution function, \textit{a priori}, that can be evaluated efficiently during online optimization. 
However, such an assumption may not hold for arbitrary distributions.
% Such a representation for the cumulative distribution function is typically not available except for the simplest classes of probability distributions (\emph{e.g.}, Gaussian).

% Another challenge while using chance-constrained programming is identifying how select the level of constraint violation. 
% Most stochastic planning methods select this parameter arbitrarily, and evaluate the agent performance at various risk thresholds \cite{lyons_chance,wang2020,SeanFast}.
% For long-term autonomy in the real-world, agents must be able to determine how to select this violation as a function of their surrounding.
% As such these agents should be able to vary their risk thresholds in an appropriate manner to balance conservativeness and safety.

This paper makes the following contributions.
% First, it presents a framework for systematically varying the level of safety and conservativeness of a the motion plans based on the risk of serious injury to the passenger (Section \ref{sec:risk threshold}).
First, it provides a novel, parallelizable, closed-form over-approximation of the risk of collision given an arbitrary probability distribution with a twice-differentiable density function (Section \ref{subsec:chance constraint}).
Second, it describes the analytical derivative of the probability over-approximation with respect to the control parameter (Section \ref{subsec:gradient}).
Third, this paper presents a general,  real-time risk-aware motion planning framework that guarantees robot safety up to a user-specified risk threshold over time intervals rather than just at discrete time instances (Section \ref{subsec: online operation}).
Lastly, this paper demonstrates \riskmethodname{} in simulation and on hardware and compares its performance to CCPBA, and Cantelli MPC on a variety of scenarios (Section \ref{sec:experiments}).
The rest of this manuscript is organized as follows:
Section \ref{sec: notation} gives necessary notations for the manuscript.
Section \ref{sec: prelim} presents the parameterized vehicle dynamics and environment representation.
Section \ref{sec:online planning} describes obstacle uncertainty, risk-aware vehicle safety and online planning.
Section \ref{sec:risk-rtd and reachability analysis} discusses offline reachability analysis of the vehicle dynamics. 
Section \ref{sec: system_extensions} discusses the generalizability of the proposed algorithm and highlights the assumptions that need to be satisfied to apply RADIUS to other robotic systems.
Section \ref{sec:conclusion} concludes the paper.
% Note that upon acceptance of the paper the we will make the code for the proposed work openly available.

% For brevity, we provide self-collision constraints and all proofs in our supplementary material.
% \pat{make sure we link to supplement before submitting}

% \pat{Do we want to come out and say here that we're only dealing with static obstacles but the extension to dynamic obstacles is straightforward?}
% \shrey{done above}

% \subsection{Notation}
% The $n$-dimensional real numbers are $\R^n$, natural numbers are $\N$, the unit circle is $\s^1$, and the set of $3\times 3$ rotation matrices is $\SO(3)$.
% Vectors are either $[x_1,\cdots,x_n]^\top$ or $(x_1,\cdots,x_n)$ depending on if the size/shape is relevant.
% Let $U, V \subset \R^n$.
% For a point $p \in U$, $\{p\} \subset U$ is the set containing $p$.
% The power set of $U$ is $\P(U)$.
% The Minkowski sum is $U \oplus V = \{u + v~|~u \in U,\ v \in V\}$.
% For a matrix $A \in \R^{n\times n}$, $AU = \{Au~|~u \in U\}$.
% For matrices, $\prod$ performs right multiplication with increasing index (e.g., $\prod_{i=1}^3 A_i = A_1A_2A_3$).
% Greek lowercase letters in angle brackets are indeterminate variables (e.g., $\sym{\sg}$).
% Superscripts on points index elements of a set.
% Subscripts are joint indices or contextual information.

\section{Notation}
\label{sec: notation}

% and the notion of safety that is enforced throughout the remainder of this manuscript.

% \subsection{Notation}
% \label{subsec: notation}

% Sets and subspaces are typeset using calligraphic font.
% Superscripts are used for descriptive words \Ram{what does a descriptive word mean?}.
% Subscripts are used to follow conventional notations in the literature. 
% Subscripts are primarily used as an index or to describe an particular coordinate of a vector.

% \subsection{Notations}
% This section defines the set representations and notation that are used throughout the remainder of this manuscript.
This section formalizes notations used throughout the paper.
Sets and subspaces are typeset using calligraphic font.
% Superscripts are used for descriptive words \Ram{what does a descriptive word mean?}.
% Subscripts are used to follow conventional notations in the literature. 
Subscripts are primarily used as an index or to describe a particular coordinate of a vector.
% \subsection{Notations}
Let $\R$ and $\N$ denote the spaces of real numbers and natural numbers respectively.
% Let $[A,B]$ denote the concatenation of two matrices $A$ and $B$ with an identical number of rows.
% Let $0_{n_1\times n_2}$ denote the $n_1$-by-$n_2$ zero matrix.
% Denote $\|a\|_\infty$ the infinity norm of a vector $a\in\R^n$.
The Minkowski sum between two sets $\mathcal A$ and $\mathcal A'$ is $\mathcal A\oplus \mathcal A' = \{a+a'\mid a\in \mathcal A, ~a'\in \mathcal A'\}$.
Given a set $\mathcal A$, denote its power set as $P(\mathcal A)$.
% and its cardinality as $|\mathcal A|$.
% The power set of a set $\mathcal A$ is denoted by $P(\mathcal A)$.
Given vectors $\alpha,\beta\in\R^n$, let $[\alpha]_i$ denote the $i$-th element of $\alpha$, 
% let $\Sum(\alpha)$ denote the summation of all elements of $\alpha$, 
% let $\|\alpha\|$ denote the Euclidean norm of $\alpha$,
let $\diag(\alpha)$ denote the diagonal matrix with $\alpha$ on the diagonal, and  let $\Int(\alpha,\beta)$ denote the $n$-dimensional box $\{\gamma \in\R^n\mid [\alpha]_i\leq[\gamma]_i\leq[\beta]_i,~\forall i=1,\ldots,n\}$.
Given arbitrary matrix $A\in\R^{n\times n}$, 
% let $[A]_{i:}$ denote the $i$-th row of $A$ for any $i\in\{1,2,\ldots,n_1\}$,
% let $A^\top$ be the transpose of $A$, 
% let $[A]_{i:}$ and $[A]_{:i}$ denote the $i$-th row and column of $A$ for any $i\in\{1,2,\ldots,n_1\}$ respectively, and 
% let $|A|$ be the matrix computed by taking the absolute value of every element in $A$.
and let $\det(A)$ be the determinant of $A$.
Let $\prob(\texttt E)$ denote the probability of occurrence of some event $\texttt E$,
% Let $\pi_{xy}:\R^6\rightarrow\R^2$ be the projection operator that outputs the first two coordinates from its argument, and 
and let $\rot(a)$ denote the 2-dimensional rotation matrix $\begin{bmatrix}
        \cos(a) & -\sin(a) \\ \sin(a) & \cos(a)
    \end{bmatrix}$ for arbitrary $a\in\R$.

Next, we introduce a subclass of polytopes, called zonotopes, that are used throughout this paper:
\begin{defn}
\label{def: zonotope}
A \emph{zonotope} $\Z$ is a subset of $\R^n$ defined as
\begin{equation}
    \mathcal Z = \left\{x\in\R^n\mid x= c+\sum_{k=1}^\ell \beta_k g_k, \quad \beta_k \in [-1,1] \right\}
\end{equation}
with \emph{center} $c\in\R^n$ and $\ell$ \emph{generators} $g_1,\ldots,g_\ell\in\R^n$.
% with center $c\in\R^n$ and $\ell$ generators $g^{(1)},\cdots,g^{(\ell)}\in\R^n$.
For convenience, we denote $\mathcal Z$ by $\zonocg{c}{G}$ where $G = [g_1, g_2, \ldots, g_\ell ]\in\R^{n\times\ell}$.
\end{defn}
\noindent Note that an $n$-dimensional box is a zonotope because
\begin{equation}
\label{eq: interval is zonotope}
    \Int(\alpha,\beta) = \zonocg{\frac{1}{2}(\alpha+\beta)}{\frac{1}{2}\diag(\beta-\alpha)}.    
\end{equation}
By definition the Minkowski sum of two arbitrary zonotopes  $\Z_1 = \zonocg{c_1}{G_1}$ and $\Z_2=\zonocg{c_2}{G_2}$ is still a zonotope as $\Z_1\oplus\Z_2 = \zonocg{c_1+c_2}{[G_1,G_2]}$.
Note that one can define the multiplication of a matrix $A$ of appropriate size with a zonotope $\Z=\zonocg{c}{G}$ as
\begin{equation}
\label{eq: zono-matrix mult}
   A \Z = \left\{x\in\R^n\mid x= A c+\sum_{k=1}^\ell \beta_k A g_k, ~ \beta_k \in [-1,1] \right\}.
\end{equation}
Note that $A \Z$ is equal to the zonotope $\zonocg{A c}{A G}$. 

% \challen{
% \section{Premiliminaries}
% \subsection{Notation}
% \subsection{Vehicle Dynamics and Trajectory Parameterization}
% \begin{itemize}
%     \item Open loop dynamics
%     \item Closed loop dynamics
%     \item Hybrid System definition
%     \item Trajectory Parameterization 
%     \item Ego vehicle definition (to be clear, just state and footprint definition)
% \end{itemize} 
% }

% \challen{
% \subsection{Environment}
% \begin{itemize}
%     \item 
%     \item Obstacle vehicle definition
%     \item State that there is uncertainty in obstacle vehicle observations and that uncertainty can be described as a probability and say it is formally defined in a later section
%     \item Sensing horizon  
% \end{itemize}
% }

% \challen{
% \subsection{Risk-Aware Vehicle Safety}
% \begin{itemize}
%     \item Not-at-fault definition 
% \end{itemize}
% }

% \challen{
% \section{Offline Reachability Analysis}
% \begin{itemize}
%     \item Assumption that there exists some collection of zonotopes that contain the vehicle trajectory, then we can introduce the reachable sets with simpler notation 
% \end{itemize}
% }

% \challen{
% \section{Online Planning and Chance Constraint}
% \subsection{Vehicle Dynamics and Trajectory Parameterization}
% \begin{itemize}
%     \item Obstacle Uncertainty
%     \item Online Optimization
%     \item Chance constraint relaxation
%     \item Enhanced Online Optimization  
% \end{itemize}
% }

\section{Preliminaries}
\label{sec: prelim}

The goal of this work is to plan trajectories for a mobile robot to navigate through environments with uncertainty in the locations of obstacles while having guarantees on the risk of collision. 
We illustrate the proposed method on an autonomous vehicle in this work.
% \jinsun{this sentence may be deletable?}
This section discusses the vehicle model, parameterized trajectories, and lastly the environment.

\subsection{Open Loop Vehicle Dynamics}
\label{subsec: dynamics}

\subsubsection{High-Speed Model}
\label{subsubsec: vehicle}
This work adopts the Front-Wheel-Drive vehicle model from \cite{REFINE}, and can be generalized to All-Wheel-Drive and Rear-Wheel-Drive vehicle models.
% Note that the approach could be generalized to an all wheel or rear wheel drive models just as is done in \cite{REFINE}.
Let the ego vehicle's states at a time  $t$ be given by $\zhi(t) = [x(t),y(t),h(t),u(t),v(t),r(t)]^\top\in \R^6$, where $x(t)$ and $y(t)$ are the position of the ego vehicle's center of mass in the world frame, $h(t)$ is the heading of the ego vehicle in the world frame, $u(t)$ and $v(t)$ are the longitudinal and lateral speeds of the ego vehicle in its body frame, and $r(t)$ is the yaw rate of the vehicle center of mass.
To simplify exposition, we assume vehicle weight is uniformly distributed and ignore the aerodynamic effect while modeling the flat ground motion of the vehicles by the following dynamics:
\begin{align}
    \label{eq:dynamics_hi}
    \dzhi(t) & = 
    \begin{bmatrix}
        u(t)\cos h(t) - v(t)\sin h(t)\\ u(t)\sin h(t) + v(t)\cos h(t) \\ r(t)\\
        \frac{1}{m}\big(F_\text{xf}(t)+F_\text{xr}(t)\big) + v(t)r(t) + \Delta_u(t) \\ 
        \frac{1}{m}\big(F_\text{yf}(t)+F_\text{yr}(t)\big)-u(t)r(t)+ \Delta_v(t)\\
        \frac{1}{I_\text{zz}} \big(l_\text{f} F_\text{yf}(t) - l_\text{r} F_\text{yr}(t)\big)+ \Delta_r(t)
    \end{bmatrix},
    % \dot z_\text{lo}(t) &= \pi_4(\dot z_\text{hi}(t)). \label{eq:lowspeed_noise}
\end{align}
where $l_\text{f}$ and $l_\text{r}$ are the distances from the center of mass to the front and back of the vehicle, $I_\text{zz}$ is the vehicle's moment of inertia, and $m$ is the vehicle's mass.
Note: $l_\text{f}$, $l_\text{r}$, $I_\text{zz}$ and $m$ are all assumed to be known constants.
The tire forces along the longitudinal and lateral directions of the vehicle at time $t$ are $F_\text{xi}(t)$ and $F_\text{yi}(t)$ respectively, where the `i' subscript can be replaced by `f' for the front wheels or `r' for the rear wheels. 
$\Delta_u,\Delta_v,\Delta_r$ are modeling error signals that are unknown and account for imperfect state estimation and tire models.
To ensure the system is well-posed (i.e., its solution exists and is unique), we make the following assumption:
\begin{assum}
\label{ass: dyn error bnd}
    $\Delta_u, \Delta_v,\Delta_r$ are all square-integrable functions and are bounded (i.e., there exist real numbers $M_u, M_v, M_r\in [0,+\infty)$ such that $\|\Delta_u(t)\|_\infty\leq M_u, ~\|\Delta_v(t)\|_\infty\leq M_v, ~\|\Delta_r(t)\|_\infty \leq M_r$ for all $t$).
\end{assum}
\noindent Note that $\Delta_u,\Delta_v,\Delta_r$ can be computed using real-world data \cite[Section IX-C]{REFINE}.
In this work, we assume linear tire models and assume that we can directly control the front tire forces. 
Such assumptions are validated in \cite[Section V.B and VIII.B]{REFINE}.
Additionally, we treat rear tire forces as observed signals. 
% Unfortunately, in the real-world our state estimation and models for front and rear tire forces may be inaccurate and aerodynamic-drag force could also affect vehicle dynamics \cite[Section 4.2]{ulsoy2012automotive}.
% To account for the inaccuracy, we introduce time-varying affine modeling error terms $\Delta_u,\Delta_v,\Delta_r$ into the dynamics of $u,v,$ and $r$ as shown in \eqref{eq:dynamics_hi}.

% To ensure that this definition is well-posed (i.e.,their solution exists and is unique) and to aid in the development of our controller as described in Section \ref{subsec:cl-dynamics}, we make the following assumption:
% \begin{assum}
% \label{ass: dyn error bnd}
%     $\Delta_u, \Delta_v,\Delta_r$ are all square integrable functions and are bounded (i.e., there exist real numbers $M_u, M_v, M_r\in [0,+\infty)$ such that $\|\Delta_u(t)\|_\infty\leq M_u, ~\|\Delta_v(t)\|_\infty\leq M_v, ~\|\Delta_r(t)\|_\infty \leq M_r$ for all $t$).
% \end{assum}
% \noindent Note that $\Delta_u,\Delta_v,\Delta_r$ can be computed using real-world data \cite[Section IX-C]{REFINE}.

\subsubsection{Low-Speed Model}
When the vehicle speed lowers below some critical value $\uc>0$, applying the model described in \eqref{eq:dynamics_hi} becomes intractable as explained in \cite[Section III-B]{REFINE}. 
As a result, in this work when $u(t)\leq \uc$, the dynamics of a vehicle are modeled using a steady-state cornering model \cite[Chapter 6]{gillespie1992fundamentals}, \cite[Chapter 10]{dieter2018vehicle}. %\cite[Chapter 5]{balkwill2017performance}
Note that the critical velocity $\uc$ can be found according to \cite[(5) and (18)]{kim2019advanced}.

The steady-state cornering model or low-speed vehicle model is described using four states, $\zlo(t) = [x(t),y(t),h(t),u(t)]^\top\in\R^4$ at time $t$.
This model ignores transients on lateral velocity and yaw rate. 
Note that the dynamics of $x$, $y$, $h$ and $u$ in the low-speed model are the same as in the high-speed model \eqref{eq:dynamics_hi}; however, the steady-state cornering model describes the yaw rate and lateral speed as
\begin{align}
    \hspace{-0.3cm}\rlo(t) = \frac{\delta(t)u(t)}{l+C_\text{us} u(t)^2}, ~ \vlo(t) =l_\text{r}\rlo(t) - \frac{ml_\text{f}}{\bar c_{\alpha\text{r}}l}u(t)^2\rlo(t)  \label{eq: rlo, vlo}
\end{align}
with understeer coefficient
\begin{equation}
    C_\text{us} = \frac{m}{l}\left( \frac{l_\text{r}}{\bar c_{\alpha \text{f}}} - \frac{l_\text{f}}{\bar c_{\alpha \text{r}}} \right),
\end{equation}
where $\delta(t)$ is the front tire steering angle at time $t$, $\bar c_{\alpha \text{f}}$ and $\bar c_{\alpha \text{r}}$ are cornering stiffness of the front and rear tires respectively. 
% As a result, $\dzlo$ satisfies the dynamics of the first four states in \eqref{eq:dynamics_hi} except with $\rlo$ taking the role of $r$ and $\vlo$ taking the role of $v$.

As we describe in Section \ref{subsec:cl-dynamics}, the high-speed and low-speed models can be combined together as a hybrid system to describe the vehicle behavior across all longitudinal speeds. 
In short, when $u$ transitions past the critical speed $\uc$ from above at time $t$, the low speed model's states are initialized as:
\begin{equation}
\label{eq: reset h2l}
    \zlo(t) = \pi_{1:4}(\zhi(t))
\end{equation}
where $\pi_{1:4}:\R^6\rightarrow\R^4$ is the projection operator that projects $\zhi(t)$ onto its first four dimensions via the identity relation.
If $u$ transitions past the critical speed from below at time $t$, the high-speed model's states are initialized as
\begin{equation}
\label{eq: reset l2h}
    \zhi(t) = [\zlo(t)^\top, \vlo(t), \rlo(t)]^\top.
\end{equation}

\subsection{Trajectory Parameterization}
\label{subsec: trajectory param}
In this work, each trajectory plan is specified over a compact time interval of a fixed duration $\tf$. 
% Without loss of generality, we let each plan begin at time $\tz$ and end at a fixed final time $\tf$.
% Without loss of generality, we let this compact time interval have a fixed duration $\tf$.
Because $\riskmethodname{}$ performs receding-horizon planning, we make the following assumption about the time available to construct a new plan:
\begin{assum} \label{assum:tplan}
During each planning iteration starting from time $\tz$, the ego vehicle has $\tplan$ seconds to find a control input that is applied during the time interval $[\tz+\tplan, \tz+\tplan+\tf]$, where $\tf \geq 0$ is some user-specified constant. 
In addition, the vehicle state at time $\tz+\tplan$ is known at time $\tz$. 
% \jinsun{do we want to weaken the last sentence by introducing prediction error?}
\end{assum}
\noindent This assumption requires \riskmethodname{} to generate plans in real-time.
This means that the ego vehicle must create a new plan before it finishes executing its previously planned trajectory.

In each planning iteration, \riskmethodname{} chooses a \emph{desired trajectory} to be followed by the ego vehicle.
The desired trajectory is chosen from a pre-specified continuum of trajectories, with each uniquely determined by an $n_p$-dimensional \textit{trajectory parameter} $p \in \P\subset \R^{n_p}$.
We adapt the definition of trajectory parametrization from \cite[Definition 7]{REFINE}, and
% In the interest of brevity, a formal definition of this trajectory parameterization can be found in Appendix \ref{app: control} \jinsun{\cite{supplement2023}}.
note two important details about the parametrization:
First, all desired trajectories share a time instant $\tnb\in[\tz+\tplan,\tz+\tplan+\tf)$ such that every desired trajectory consists of a \emph{driving maneuver} during $[\tz+\tplan,\tz+\tplan+\tm)$ and a \emph{contingency braking maneuver} during $[\tz+\tplan+\tm,\tz+\tplan+\tf]$.
% The driving maneuver is a maneuver that is commonly applied in daily driving, i.e., speed change, direction change, and lane change. 
Second, the contingency braking maneuver slows down the ego vehicle's longitudinal speed to $0$ by $\tf$.
Note this latter property is used to ensure safety as we describe in Section \ref{subsec: risk-aware safety}.
There are many choices of trajectory parametrizations, and the trajectory parametrization utilized in this work is provided in Appendix \ref{app: trajectory parametrization}.

\subsection{Closed Loop Vehicle Dynamics}
\label{subsec:cl-dynamics}
% This subsection describes the controller and closed-loop hybrid system vehicle model. 
% Depending on the trajectory parameter $p \in \P$, 
% Motivated by the controller in \cite{REFINE}, Appendix \ref{app: control} \jinsun{\cite{supplement2023}} describes 
A partial feedback linearization controller that tracks a parameterized desired trajectory robustly and accounts for the modeling error described by $\Delta_u$, $\Delta_v$ and $\Delta_r$ is provided in \cite[Section V]{REFINE}.
% there exists some controller (see Appendix \ref{empty}) that can track the aforementioned parameterized trajectories (Section\ref{subsec: trajectory param}). 
Using this controller, the closed loop dynamics of the high and low-speed systems can be written as:  
\begin{align}
    \dzhi(t) &= \fhi(t,\zhi(t),p), \label{eq: hi_veh_closeloop}\\
    \dzlo(t) &= \flo(t,\zlo(t),p), \label{eq: lo_veh_closeloop}
\end{align}
respectively.
Moreover, because the vehicle dynamics change depending on $u$, we model the ego vehicle as a hybrid system $HS$ \cite[Section 1.2]{lunze2009handbook} as is done in \cite[Section V-C]{REFINE}.
The hybrid system $HS$ contains a high-speed mode and a low-speed mode with state $z:=\zhi$, whose dynamics can be written as 
\begin{equation}
\label{eq: dyn tilde_z}
    z(t) = \begin{dcases}
            \fhi(t,\zhi(t),p) , \text{ if } u(t) > \uc, \\ \\
        \begin{bmatrix}
            \flo(t,\zlo(t),p) \\ 0_{2\times1}
        \end{bmatrix} , \text{ if } u(t) \leq \uc,
    \end{dcases}
\end{equation}
Instantaneous transition between the two modes within $HS$ is described using the notion of a \emph{guard} and \emph{reset map}.
The guard triggers a transition and is defined as $\{z(t)\in\R^{6}\mid u(t) =\uc\}$. 
Once a transition happens, the reset map resets the first $z(t)$ via \eqref{eq: reset h2l} if $u(t)$ approaches $\uc$ from above and via \eqref{eq: reset l2h} if $u(t)$ approaches $\uc$ from below.

\subsection{Ego Vehicle, Environment, and Sensing}
\label{subsec: environment}
To provide guarantees about vehicle behavior in a receding horizon planning framework, we define the ego vehicle's footprint similarly to \cite[Definition 10]{REFINE}:
\begin{defn}
\label{def: footprint}
Given $\W\subset\R^2$ as the world space, the ego vehicle is a rigid body that lies in a rectangle $\mathcal O^{ego}:=\Int([-0.5L,-0.5W]^T,[0.5L,0.5W]^T )\subset\W$ with width $W>0$ and length $L>0$ at time $t=0$. 
$\mathcal O^{ego}$ is called the \emph{footprint} of the ego vehicle.
\end{defn}
\noindent For arbitrary time $t$, given state $z(t)$ of the ego vehicle that starts from initial condition $z_0\in\Z_0\subset\R^6$ and applies a control input parameterized by $p\in\P$, the ego vehicle's \emph{forward occupancy} at time $t$ can be represented as
\begin{equation}
    \hspace{-0.2cm}\E\big(t,z_0,p\big) := \rot(h(t))\cdot\Oego + [x(t),y(t)]^\top,
\end{equation}
% \jinsun{I don't like this notation because if $z(t)$ is known, then why $z_0$ and $p$ are necessary? I'd suggest $\E(t,z_0,p)$.}
which is a zonotope by \eqref{eq: zono-matrix mult}.
% \subsection{Environment}
% \label{subsec: environment}
% \ram{why is the ego vehicle's footprint in the environment section?} \challen{resolved}
We define the obstacles as follows:
\begin{defn}
% Let $\O\Oobs(t)$ be the location of an obstacle given according to sensor measurements at time $t$, where $i\in\I$ is the index of the obstacle and $\I$ contains finitely many elements. 
% \challen{maybe we should move Assumption 6 to the beginning of this section, and reframe it st we define ]$\Oest$ inside of it.}
An \emph{obstacle} is a set $\Oobs(t)\subset \W$ that the ego vehicle should not collide with at time $t$, where $i\in\I$ is the index of the obstacle and $\I$ contains finitely many elements. 
% Let the footprint of the obstacle 
% Let $\cobs(t)\in\W$ be the center of $\Oobs(t)$. 
\end{defn}

\noindent The dependency on $t$ in the definition of an obstacle allows the obstacle to move as $t$ varies. 
However, if the $i$-th obstacle is static, then $\Oobs(t)$ is a constant.
Note that in this work, we assume that we do not have perfect knowledge of obstacle locations and motion as is normally the case in real-life scenarios.
We assume that this uncertainty in the locations of obstacles is represented by some arbitrary probability distribution.
This is described in the next section.
% We formally discuss its representation in the next section.
Assuming that the ego vehicle has a maximum speed $\vegomax$ and all obstacles have a maximum speed $\vobsmax$ for all time, we make the following assumption on planning and sensing horizon. 
\begin{assum}
\label{ass: sense horizon}
The ego vehicle senses all obstacles within a sensor radius greater than $(\tf +\tplan)\cdot(\vegomax+\vobsmax)+0.5\sqrt{L^2+W^2}$ around its center of mass.
\end{assum}
\noindent Assumption \ref{ass: sense horizon} ensures that any obstacle which may cause a collision between times $t\in[\tz+\tplan, \tz+\tplan+\tf]$ can be detected by the vehicle \cite[Theorem 15]{vaskovtowards}.
Note one could treat sensor occlusions as obstacles that travel at the maximum obstacle speed \cite{yu2019occlusion,yu2020risk}.
% Lastly, to account for uncertainty in the locations of obstacles caused by imperfect sensing of obstacles due to sensor noise, or imperfect prediction of obstacle motion, one can represent the obstacle location as a probability distribution. 
% In Section \ref{subsec:obs uncertainty}, we formally define the properties of this probability distribution. 
To simplify notation, we reset time to $0$ whenever a feasible control policy is about to be applied, \emph{i.e.}, $t_0+\tplan = 0$.
Finally to aid in the descriptions of obstacle uncertainty and system trajectory over-approximation in Sections \ref{sec:online planning} and \ref{sec:risk-rtd and reachability analysis}, we partition the planning horizon $[0,\tf]$ into $\tf/\Delta_t$ \emph{time intervals} with some positive number $\Delta_t$ that divides $\tf$, and denote $\T_j$ the $j$-th time interval $[(j-1)\Delta_t, j\Delta_t]$ for any $j\in \J:=\{1,2,\ldots,\tf/\Delta_t\}$.

\section{Online Planning Under Uncertainty}
\label{sec:online planning}
% We now describe how \riskmethodname{} seeks a feasible not-at-fault plan online. 

This section constructs a chance-constrained optimization problem to generate risk-aware motion plans. 
First, we describe the representations of the obstacle uncertainty used in this work. 
Then, we define risk-aware vehicle safety and conclude by illustrating how to formulate online planning as a chance-constrained optimization that limits the risk of collision.

\subsection{Obstacle Uncertainty}
\label{subsec:obs uncertainty}
To incorporate uncertainty in both obstacle sensing and obstacle motion prediction into the motion planning framework, let $\wobs$ be a random variable that takes values in $\W$.
$\wobs$ describes the possible locations of the center of the $i$-th obstacle, $\Oobs(t)$, for any time $t\in \T_j$.
We then make the following assumption about how $\wobs$ is distributed:
\begin{assum}
\label{ass: obs pdf}
    % For any $i\in\I$ and $j\in\J$, there exists a function $f:\W\times\I\times\J\rightarrow\R_+$ such that $\int_\W f(w) ~dw = 1$ and $f(w)$ gives the probability density of a point $w\in\W$ appears inside $\Oobs(t)$ at any time $t\in \T_j$. 
    % For any $(i,j)\in\J\times\I$, there exists a probability density function $f$ such that $f(w)$ gives the probability density of a point $w\in\W$ belonging to $\cup_{t\in \T_j}\Oobs(t)$. 
    % Moreover, $f$ is continuous and differentiable with respect to $w$.
    % The location $\Oobs(t)$ of the $i$-th obstacle at some $t \in \T_j$ is given by a continuous and at least twice-differentiable probability distribution from an exponential family.
    For any $i\in\I$ and $j\in\J$, $\wobs$'s 
    % \strikethrough{let $\wobs := \cobs((j-0.5)\Delta t)$. $\wobs$ is a random variable with a probability density function $\pdf(\cdot\mid i,j):\W\rightarrow [0,+\infty)$. The probability density function $\pdf$ is twice-differentiable.}
    % \new{the probability distribution of $\wobs := \cobs((j-0.5)\Delta_t)$ satisfies a twice-differentiable probability density function $\pdf(\cdot):\W\rightarrow [0,+\infty)$.}
    % and draws from some probability distribution in the exponential family. \jinsun{the sentences don't make sense}\challen{better now?} 
    % \strikethrough{For any $i\in\I$ and $j\in\J$, $\cup_{t\in \T_j} \Oobs(t)$ stays inside a zonotope $\zonocg{\w}{\Gobs}\subset\W$ where $\Gobs$ is a 2-row matrix that accounts for the volume swept by $\Oobs(t)$ over the time interval $\T_j$.  
    % $\w\in\W$ satisfies a probability distribution from an exponential family with probability density $f(\w)\in [0,+\infty)$.
    % The probability density function $f$ is further assumed to be continuous and at least twice-differentiable with respect to $w$. 
    Probability Density Function (PDF), $\pdf:\W\rightarrow[0,+\infty)$, exists and is twice-differentiable.
    In addition, there exists some $\wobs$ sampled according to $\pdf$ such that $\Oobs(t)\subseteq\zonocg{\wobs}{\Gobs}$ at some $t \in \T_j$, where $\Gobs$ is a 2-row constant matrix.
    % 2-row matrix $\Gobs$ such that there exists some
    % % 
    % \new{the distribution of $\wobs$ admits some probability density function $\pdf:\W\rightarrow[0,+\infty)$ that is Lebesgue-integrable and twice-differentiable.} \jinsun{note I no longer use notation $q(\cdot\mid i,j)$ to avoid the potential misleading to conditional probability, especially in the case where our $i$ and $j$ are not random variables at all.}
\end{assum}
% \noindent We let $\prob_{i,j}$ denote the probability distribution function associated with $\pdf$.

% To account for the obstacle's footprint, we make the following assumption:
% \begin{assum}
% \label{ass: swept_volume assumption}
% There exists a 2-row matrix $\Gobs$ such that
% % \strikethrough{$\cup_{t\in \T_j} \Oobs(t)$ stays inside a zonotope $\zonocg{\wobs}{\Gobs}\subset\W$ for all $(i,j)\in\I\times\J$.}
% $\Oobs(t)\subseteq\zonocg{\wobs}{\Gobs}$ for all $i\in\I$, $j\in\J$, and $t \in \T_j$.
% \end{assum}

% and \ref{ass: swept_volume assumption}

\noindent According to Assumption \ref{ass: obs pdf}, $\zonocg{\wobs}{\Gobs} = \wobs+\zonocg{0}{\Gobs}$ has an uncertain center $\w$ with probability density $\pdf(\w)$, and has invariant shape and size with respect to $i$ and $j$ due to the constant generator matrix $\Gobs$ that accounts for the footprint of any obstacle.
Such probability density functions $\pdf$ can be generated by, for example, performing variants of Kalman Filter \cite{vel_EKF_zindler,almeida2013real} on the $i$-th obstacle given its dynamics or detecting the $i$-th obstacle with a Bayesian confidence framework \cite{fisac2018probabilistically} during $\T_j$.
The generator matrix $\Gobs$ can be generated as the union of footprints of all obstacles.

% \cite{kalman1960new,ribeiro2004kalman,wan2000unscented}

\subsection{Risk-Aware Vehicle Safety}
\label{subsec: risk-aware safety}
In dynamic environments, it may be difficult to avoid collisions in all scenarios (\emph{e.g.}, a parked ego vehicle can be run into). 
As a result, we instead develop a trajectory synthesis technique that ensures that the ego vehicle is not-at-fault \cite[Definition 11]{vaskovtowards}.
We define not-at-fault safety in this work from a probabilistic perspective by bounding the probability of the ego vehicle running into any obstacles.
\begin{defn}\label{defn:notatfault-risk}
Let the ego vehicle start from initial condition $z_0\in\Z_0$ with control parameter $p\in\P$.
% Let the risk threshold be the maximum allowable probability of collision between the ego vehicle and all obstacle vehicles for a given motion plan.
Given a user-specified \emph{allowable risk threshold} $\epsilon\in[0,1]$, the ego vehicle is \emph{not-at-fault} with a risk of collision of at most $\epsilon$, if it is stopped, or if 
% \begin{equation}
%     \stkout{\prob\bigg(\cup_{t\in[0,\tf]}\Big(\E\big(t,z_0,p\big) \cap \big(\cup_{i\in\I}\Oobs(t)\big) \Big)  \bigg) \leq \epsilon }   
% \end{equation}
\begin{equation}
    \label{eq: chance constraint def}
    % \sum_{i\in\I}\sum_{j\in\J} \prob_{i,j}\Big( \cup_{t\in \T_j} \big( \E(t,z_0,p) \oplus \Gobs \big) \Big) \leq \epsilon,
     \sum_{i\in\I}\sum_{j\in\J}\int_{ \cup_{t\in \T_j} \left( \E(t,z_0,p)\oplus\zonocg{0}{\Gobs} \right)} \pdf(w) ~dw \leq \epsilon,
\end{equation}
% \begin{equation}
%    \label{eq: chance constraint def}
%     \new{\prob\big( \{t\in[0,\tf]\mid\E(t,z_0,p)\cap\cup_{i\in\I}\Oobs(t)\}\neq\emptyset \big) \leq \epsilon }   
% \end{equation}
% \begin{equation}
%    % \label{eq: chance constraint def}
%     \challen{\prob\big( \wobs \in \E(t,z_0,p)\oplus\zonocg{0}{\Gobs} \mid t\in[0,\tf], i \in \I \big)}
% \end{equation}
while it is moving during $[0,\tf]$.
\end{defn}
\noindent Note that the domain of integration in \eqref{eq: chance constraint def} is the ego vehicle's forward occupancy buffered by the obstacle's footprint.
Thus to satisfy \eqref{eq: chance constraint def}, the probability that the buffered ego vehicle's forward occupancy intersects with an obstacle's center must be bounded by epsilon over all time intervals and all possible obstacles.
In particular, this ensures that the probability of the ego vehicle (including its footprint) intersecting with any obstacle (including its footprint) over the planning horizon is bounded by $\epsilon$ via \cite[Lem. 5.1]{guibas2003zonotopes}.

\subsection{Online Optimization}
\label{subsec:online opt}

% \challen{@JL I will want feedback on the part below. I think Ram may not like it in its current form}
% During online operation, \riskmethodname{} solves an optimization problem to construct a motion plan that results in no greater than $\epsilon$ risk of collision between the ego vehicle and obstacles.
% Optimization problems with constraints such as these, which allow for some bounded level of constraint violation are called chance-constrained optimizations.
% In this case, the bounded level of constraint violation is given by $\epsilon$, where $\epsilon$ is some user selected parameter for the allowable risk of the motion plan.

To construct a motion plan that ensures the ego vehicle is not-at-fault with a risk of collision at most $\epsilon$ during online planning, one could solve the following chance-constrained optimization:
% \begin{align*}
%     \min_{p \in \P} & ~ \cost(z_0,p) \hspace{5.5cm} \opt\\
%     \text{s.t.}
%     % \begin{split} 
%     & ~ \sum_{i\in\I}\sum_{j\in\J}\int_{\xi_j(z_0,p)\oplus\zonocg{0}{\Gobs}} \pdf(w) ~dw \leq\ \epsilon
%     % \end{split}
% \end{align*}
\begin{align*}
    \min_{p \in \P} & ~ \cost(z_0,p) \hspace{5.3cm} \opt\\
    \text{s.t.}
    % \begin{split} 
    % & ~ \new{\prob\bigg(\cup_{t\in[0,\tf]}\Big(\E\big(t,z_0,p\big) \cap \big(\cup_{i\in\I}\Oobs(t)\big) \Big)  \bigg) \leq \epsilon}\\
    % & ~ \sum_{i\in\I}\sum_{j\in\J} \prob_{i,j}\Big( \cup_{t\in \T_j} \big( \E(t,z_0,p) \oplus \Gobs \big) \Big) \leq \epsilon
    &  \sum_{i\in\I}\sum_{j\in\J}\int_{ \cup_{t\in \T_j} \left( \E(t,z_0,p)\oplus\zonocg{0}{\Gobs} \right)} \pdf(w) ~dw \leq \epsilon
    % \end{split}
\end{align*}
where $\cost:\Z_0\times\P \to \R$ is a user-specified cost function, and the constraint is a chance constraint that ensures the vehicle is not-at-fault with a maximum risk of collision $\epsilon$ during the planning horizon as stated in Definition \ref{defn:notatfault-risk}.

To achieve real-time motion planning, \opt{} must be solved within $\tplan$ seconds.
However, efficiently evaluating the chance constraint in \opt{} in a closed form can be challenging in real applications for two reasons. 
First, the exact information of the ego vehicle's location $\E(t,z_0,p)$ at any time is usually inaccessible due to the nonlinear and hybrid nature of the vehicle dynamics.
Second, the probability distribution that describes the uncertain observation of an obstacle's location as described in Assumption \ref{ass: obs pdf} can be arbitrary.
As illustrated in Appendix \ref{subsec: ablation study}, even approximating this chance constraint accurately using Monte-Carlo is challenging for real-time motion planning. 
Therefore to achieve real-time performance, \riskmethodname{} seeks a closed-form approximation of the risk of collision for evaluation efficiency.
This paper focuses on constructing an over-approximation to ensure that \riskmethodname{} does not under-estimate the true risk of collision or generate plans that violate the not-at-fault condition from Definition \ref{defn:notatfault-risk}. 
Moreover, it is preferable that this approximation is differentiable, as providing the gradient of the constraint can speed up the solving procedure of online optimization.
We describe how we generate an approximation that satisfies these requirements in the next two sections.

\section{Offline Reachability Analysis}
\label{sec:risk-rtd and reachability analysis}

Due to the nonlinear and hybrid nature of the vehicle dynamics, it is challenging to compute the trajectory of vehicle state exactly when evaluating the chance constraint in \opt{}.
To resolve this challenge, \riskmethodname{} over-approximates the ego vehicle's trajectory using zonotopes as stated below:
\begin{assum}
\label{ass: offline reachability}
    Let $z$ be a solution to \eqref{eq: dyn tilde_z} starting from initial condition $z_0\in\Z_0$ with control parameter $p\in\PP$.
    For each $j\in\J$, there exists a map $\xi_j:\Z_0\times\PP\rightarrow P(\W)$ such that 
    \begin{enumerate}
        \item $\xi_j(z_0,p)$ contains the ego vehicle's footprint during $\T_j$, i.e., $\cup_{t\in \T_j} \E(t,z_0,p)\subseteq \xi_j(z_0,p)$, and
        \item $\xi_j(z_0,p)$ is a zonotope of the form $\zonocg{c_j(z_0)+A_j\cdot p}{G_j}$ with some linear function $c_j:\Z_0\rightarrow \R^2$, some matrix $A_j\in\R^{2\times n_p}$ and some 2-row matrix $G_j$.
    \end{enumerate}
\end{assum}

The collection of maps $\{\xi_j\}_{j\in\J}$ can be constructed by applying existing techniques for offline reachability analysis \cite[Section VI]{REFINE}.
In particular, $\{\xi_j\}_{j\in\J}$ can be generated by first applying the open-source toolbox CORA \cite{althoff2015introduction} to over-approximate the trajectory of the ego vehicle's state with initial condition $z_0$ and control parameter $p$ using a collection of zonotopes, where each zonotope over-approximates a segment of the trajectory during one small time interval among $\{\T_j\}_{j\in\J}$, and then accounts for the ego vehicle's footprint.
The proof of Lemma 26 in \cite{REFINE} provides explicit formulas for $\xi_j$, $c_j$, $A_j$ and $G_j$.
Note formulas provided in \cite[Lemma 26]{REFINE} assume the computation in the body frame of the ego vehicle, i.e., assuming $x(0)=y(0)=h(0)=0$.
In the case when the initial position and heading of the ego vehicle are not zeros, one can represent $\xi_j(z_0,p)$ in the world frame via frame transformation based on $z_0$.
In the remainder of this manuscript, we assume $\xi_j(z_0,p)$ is represented in the world frame. 
We refer to $\xi_j(z_0,p)$ as the zonotope reachable set.
\begin{figure*}[!htb]
    \centering
    \begin{subfigure}[b]{0.24\textwidth}
         \includegraphics[trim={0cm, 18.4cm, 27.9cm, 0cm},clip,width=\textwidth]{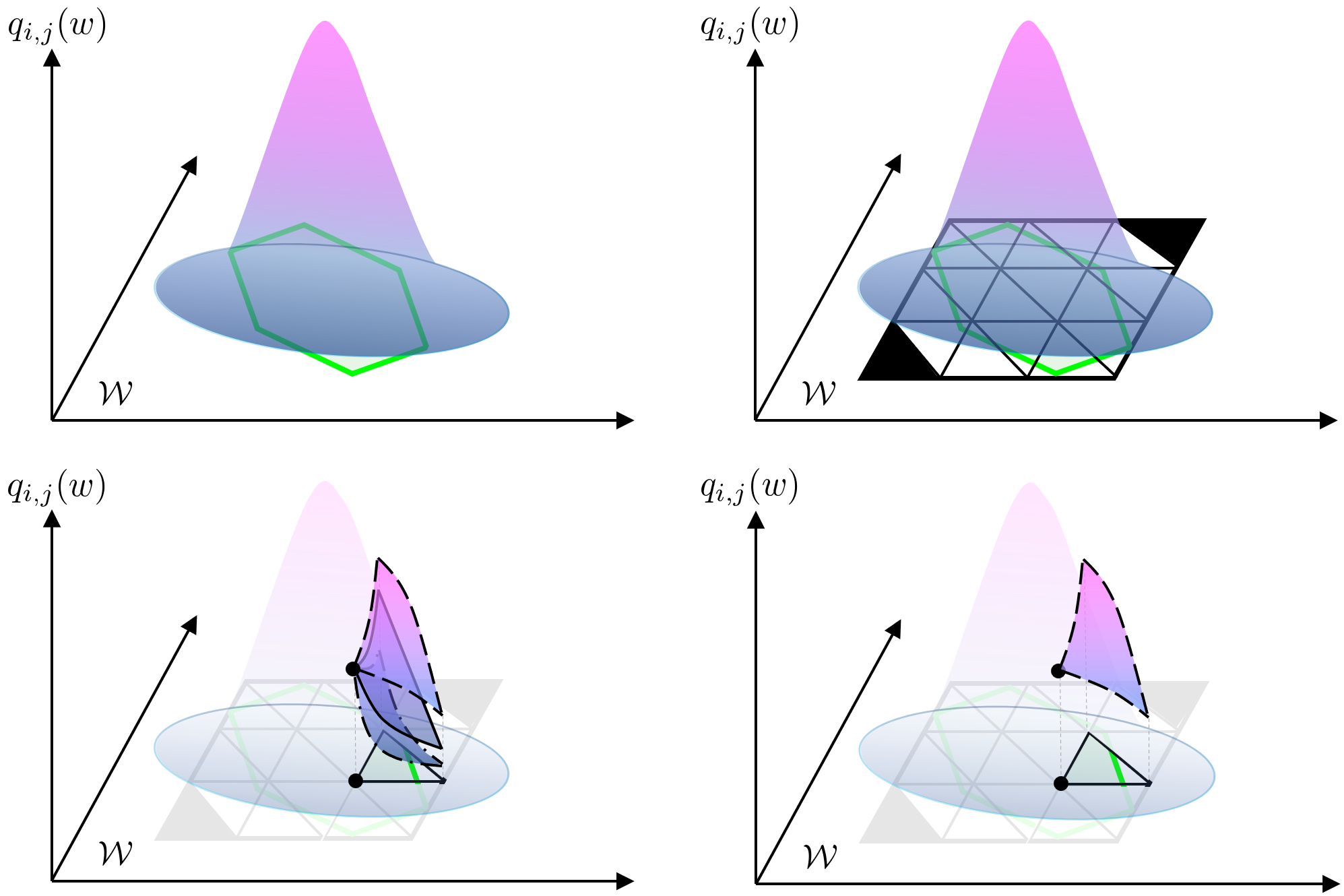}
         \caption{PDF Integration (Sec. \ref{subsubsec: PDF Integration}).}
         \label{fig:PDF Integration}
    \end{subfigure}
    \begin{subfigure}[b]{0.24\textwidth}
         \includegraphics[trim={27.5cm, 18.4cm, 0cm, 0cm},clip,width=\textwidth]{figures/Relaxation_v4.png}
         \caption{Domain Relaxation (Sec. \ref{subsubsec: Domain Relaxation}).}
         \label{fig: Domain Relaxation}
    \end{subfigure}
    \begin{subfigure}[b]{0.24\textwidth}
         \includegraphics[trim={0cm, 0.3cm, 27.9cm, 18.5cm},clip,width=\textwidth]{figures/Relaxation_v4.png}
         \caption{Integrand Relaxation (Sec. \ref{subsubsec: Integrand Relaxation}).}
         \label{fig: subsubsec: Integrand Relaxation}
    \end{subfigure}
    \begin{subfigure}[b]{0.24\textwidth}
         \includegraphics[trim={27.5cm, 0.2cm, 0cm, 18.5cm},clip,width=\textwidth]{figures/Relaxation_v4.png}
         \caption{Closed-form Computation (Sec. \ref{subsubsec: Closed-form Computation}).}
         \label{fig: Closed-form Computation}
    \end{subfigure}
    \caption{An illustration of chance constraint relaxation.
    Given arbitrary $(i,j)\in\I\times\J$, in (a) the risk of collision between the ego vehicle and the $i$-th obstacle during time interval $\T_j$ is relaxed as the integration of probability density function $\pdf$ (shown in purple and blue) over zonotope $\xi_j(z_0,p)\oplus\zonocg{0}{\Gobs}\subset\W$ (shown in green). 
    In (b), $\xi_j(z_0,p)\oplus\zonocg{0}{\Gobs}$ is over-approximated by a collection of right-angled triangles colored in white and black depending on if a triangle has a nontrivial intersection with $\xi_j(z_0,p)\oplus\zonocg{0}{\Gobs}$ or not.
    In (c), Interval Arithmetic is used to generate an over- and under-approximation of $\pdf$ over each right-angled triangle. 
    And in (d), the integration of the over-approximation of $\pdf$ over each right-angled triangle is computed in closed form.}
    \label{fig: chance constraint relaxation}
    % \vspace*{-0.5cm}
\end{figure*}
\section{An Implementable Alternative to \opt{}}
\label{sec: chance constraint}

This section describes an implementable alternative to \opt{} that can be solved rapidly.
In particular, we discuss how to relax the chance constraint in \opt{} in a conservative fashion.

%%%%%%%%%%%%%%%%%%%%%%%%%%%%%%%%%%%%%%%%%%%%%
\subsection{Chance Constraint Relaxation}
\label{subsec:chance constraint}

The chance constraint in \opt{} is relaxed conservatively as illustrated in Figure \ref{fig: chance constraint relaxation}.
First, the risk of collision during $[0,\tf]$ is over-approximated using Assumption \ref{ass: offline reachability}.
Then we relax the domain of integration into a collection of right-angled triangles and relax the PDF as a quadratic polynomial. 
Finally, we describe a closed-form equation for the relaxed integral.

\subsubsection{PDF Integration}
\label{subsubsec: PDF Integration}
Recall the ego vehicle's performance during the planning horizon is over-approximated by a collection of maps $\{\xi_j\}_{j\in\J}$, then the chance constraint in \opt{} is relaxed according to the following lemma which follows directly from the first property in Assumption \ref{ass: offline reachability}:
\begin{lem}
\label{lem: risk condition}
    Suppose the ego vehicle starts with initial condition $z_0\in\Z_0$ and control parameter $p\in\PP$.
    Let probability density functions $\{\pdf\}_{(i,j)\in\I\times\J}$ and matrix $\Gobs$ be as assumed in Assumption \ref{ass: obs pdf}.
    Let maps $\{\xi_j\}_{j\in\J}$ be as assumed in Assumption \ref{ass: offline reachability}.
    Then for arbitrary $\epsilon\in[0,1]$, \eqref{eq: chance constraint def} holds if
    \begin{equation}
    \label{ineq: pdf relaxation}
        \sum_{i\in\I}\sum_{j\in\J}\int_{\xi_j(z_0,p)\oplus\zonocg{0}{\Gobs}} \pdf(w) ~dw \leq \epsilon.
    \end{equation}
%     then
%     \begin{equation}
%     \label{eq: chance constraint repeat}
%     % \sum_{i\in\I}\sum_{j\in\J} \prob_{i,j}\Big( \cup_{t\in T_j} \big( \E(t,z_0,p) \oplus \Gobs \big) \Big) \leq \epsilon,
%      \sum_{i\in\I}\sum_{j\in\J}\int_{ \cup_{t\in T_j} \left( \E(t,z_0,p)\oplus\zonocg{0}{\Gobs} \right)} \pdf(w) ~dw \leq \epsilon.
% \end{equation}
    % \begin{equation}
    %     \stkout{\prob\bigg(\cup_{t\in[0,\tf]}\Big(\E\big(t,z_0,p\big) \cap \big(\cup_{i\in\I}\Oobs(t)\big) \Big)  \bigg) \leq \epsilon.}
    % \end{equation}
    % \begin{equation}
    %     \new{\hspace{-0.2cm}\prob\big( \{t\in[0,\tf]\mid\E(t,z_0,p)\cap\cup_{i\in\I}\Oobs(t)\}\neq\emptyset \big) \leq \epsilon. }
    % \end{equation}
\end{lem}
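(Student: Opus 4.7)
The plan is to prove Lemma 1 by a simple monotonicity-of-integration argument, using the fact that the relaxed domain contains the original integration domain setwise and that the integrand $\pdf$ is non-negative (being a probability density function).

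First, I would invoke property (1) of Assumption \ref{ass: offline reachability}, which directly asserts $\bigcup_{t\in\T_j}\E(t,z_0,p)\subseteq \xi_j(z_0,p)$ for every $j\in\J$. The Minkowski sum with $\zonocg{0}{\Gobs}$ is a monotone operation on sets: if $A\subseteq B$ then $A\oplus C\subseteq B\oplus C$ for any $C$. Moreover Minkowski sum distributes over unions on one side, so
\begin{equation}
\bigcup_{t\in\T_j}\bigl(\E(t,z_0,p)\oplus\zonocg{0}{\Gobs}\bigr) \;=\; \Bigl(\bigcup_{t\in\T_j}\E(t,z_0,p)\Bigr)\oplus\zonocg{0}{\Gobs} \;\subseteq\; \xi_j(z_0,p)\oplus\zonocg{0}{\Gobs}.
\end{equation}

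Next, since $\pdf:\W\rightarrow[0,+\infty)$ is non-negative (Assumption \ref{ass: obs pdf}), the Lebesgue integral is monotone with respect to set inclusion of the domain. Applying this to the inclusion above yields, for every $(i,j)\in\I\times\J$,
\begin{equation}
\int_{\bigcup_{t\in\T_j}(\E(t,z_0,p)\oplus\zonocg{0}{\Gobs})}\pdf(w)\,dw \;\leq\; \int_{\xi_j(z_0,p)\oplus\zonocg{0}{\Gobs}}\pdf(w)\,dw.
\end{equation}
Summing both sides over $i\in\I$ and $j\in\J$ (both being finite index sets, so the sums are well-defined and preserve the inequality), and chaining with the hypothesis \eqref{ineq: pdf relaxation}, gives
\begin{equation}
\sum_{i\in\I}\sum_{j\in\J}\int_{\bigcup_{t\in\T_j}(\E(t,z_0,p)\oplus\zonocg{0}{\Gobs})}\pdf(w)\,dw \;\leq\; \sum_{i\in\I}\sum_{j\in\J}\int_{\xi_j(z_0,p)\oplus\zonocg{0}{\Gobs}}\pdf(w)\,dw \;\leq\; \epsilon,
\end{equation}
which is exactly \eqref{eq: chance constraint def}.

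There is no substantive obstacle here: the lemma is essentially a one-line corollary of Assumption \ref{ass: offline reachability}(1) together with monotonicity of integration of a non-negative integrand. The only item worth flagging explicitly is the elementary set identity that Minkowski sum commutes with arbitrary unions on one argument, so that the union over $t\in\T_j$ of the buffered forward occupancies equals the buffered union, permitting a clean application of the over-approximation hypothesis.
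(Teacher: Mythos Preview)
Your proof is correct and matches the paper's own reasoning: the paper simply states that the lemma ``follows directly from the first property in Assumption \ref{ass: offline reachability},'' and your write-up spells out exactly that monotonicity argument (set inclusion under Minkowski sum, then monotonicity of the integral of a non-negative density).
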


% This subsection constructs a differentiable over-approximation to the constraint in \opt{} in closed-form and computes an analytical gradient available to this expression which drastically improves performance of the online operation of \riskmethodname{}.
% To maintain conservatism while approximating the constraint in \opt{}, we relax both domain and integrand of the integration. 
% For simplicity, in the remainder of this subsection we assume that $i$ and $j$ take arbitrary values in $\I$ and $\J$, respectively. 
\subsubsection{Domain Relaxation}
\label{subsubsec: Domain Relaxation}
Recall by Assumption \ref{ass: offline reachability} that $\xi_j(z_0,p)$ can be rewritten as $\zonocg{c_{j}(z_0)+A_{j}\cdot p}{G_{j}}=\zonocg{c_{j}(z_0)}{G_{j}}+A_{j}\cdot p$.
Then to relax the domain of integration $\xi_j(z_0,p)\oplus\zonocg{0}{\Gobs}$, we start by constructing a $k$-by-$k$ grid that covers zonotope $\xi_j(z_0,0)\oplus\zonocg{0}{\Gobs} = \zonocg{c_{j}(z_0)}{[G_{j},\Gobs]}$ where $k$ is some user-specified positive integer.
Each cell in the grid shares the same size and is indexed by its row and column index in the grid.
Each cell in the grid is further divided into two simplexes as right-angled triangles that are indexed by 1 or -1 corresponding to the lower or upper triangles of the cell, respectively.
For notational ease, denote $\SS_{j,k_1,k_2,k_3}(z_0)\subset \W$ as the simplex indexed by $k_3\in\{-1,1\}$ in the cell on the $k_1$-th row and $k_2$-th column of the grid that covers $\zonocg{c_{j}(z_0)}{[G_{j},\Gobs]}$.
Define  
\begin{align}
\label{eq: bT def}
        &\hspace{-0.1cm}\bT_{j}(z_0) := \big\{\SS_{j,k_1,k_2,k_3}(z_0) \mid k_1,k_2\in\{1,2,\ldots,k\}, \\
        &\hspace{0.3cm} |k_3|=1, ~ \SS_{j,k_1,k_2,k_3}(z_0)\cap\zonocg{c_{j}(z_0)}{[G_{j},\Gobs]} \neq\emptyset  \big\}. \nonumber
\end{align}
% \jinsun{I used $\bT_j(z_0)$ instead of $\bT_{j,z_0}$, but with $\bT_j(z_0)$, the chance constraint of \optE{} later didn't fit in one column.}
as the collection of every possible $\SS_{j,k_1,k_2,k_3}(z_0)$ that intersects with $\zonocg{c_{j}(z_0)}{[G_{j},\Gobs]}$, thus
\begin{equation}
\label{eq: zonotope covering}
 \zonocg{c_{j}(z_0)}{[G_{j},\Gobs]}\subset( \cup_{\SS\in\bT_{j}(z_0)}\SS).   
\end{equation}
% $\zonocg{c_{\xi,j}(z_0)}{[G_{\xi,j},\Gobs]}\subset( \cup_{\T\in\bT_j(z_0)}\T)$.
Notice $\xi_j(z_0,p) = \xi_j(z_0,0) + A_{j}\cdot p$, therefore $\xi_j(z_0,p)\oplus\zonocg{0}{\Gobs} \subset \left( (\cup_{\SS\in\bT_{j}(z_0)}\SS) +A_{j}\cdot p \right)$ and
\begin{align}
    \begin{split}
        \int\displaylimits_{\xi_j(z_0,p)\oplus\zonocg{0}{\Gobs}} \hspace*{-0.75cm}  \pdf(w)~dw \leq \sum_{\SS\in\bT_{j}(z_0)} \int_{\SS +A_{j} p}  \pdf(w)~dw. \label{ineq: int relax on region}
    \end{split}
\end{align} 
\begin{rem}
    Note that for any $\SS\in\bT_{j}(z_0)$, $\SS$ depends on $z_0$ and $j$. 
    However, to reduce notational complexity, we drop its dependency on $z_0$ and $j$ in the remainder of this manuscript.
    % However, in the interest of reducing notational complexity, we drop the dependency on $z_0$ and $j$ in the remainder of this manuscript.
    % As the value of $k$ increases, one could generate a tighter cover of $\xi_j(z_0,p)\oplus\zonocg{0}{\Gobs}$, but processing these additional triangles may require additional computational resources.
\end{rem}

% \begin{figure}[!t]
%     \centering
%     \includegraphics[trim=0cm 0cm 11cm 10cm, clip,width=0.7\columnwidth,clip=true]{}
%     \caption{An illustration of covering $\zonocg{c_\xi}{[G_\xi,\Gobs]}$ in green by a grid in black with $k=9$.
%     Such 9-by-9 grid results in 64 blocks, and each block is divided into two right triangles by its diagonal in light gray. 
%     $\T_{(4,3,1)}$ and $\T_{(2,7,-1)}$ are colored in dark gray and cyan respectively. }
%     \label{fig: cover}
% \end{figure}

\subsubsection{Integrand Relaxation}
\label{subsubsec: Integrand Relaxation}

Next, we present a lemma, whose proof is provided in Appendix \ref{app: proof of integrand relaxation}, to conservatively approximate the integral in the chance constraint by relaxing the integrand.
\begin{lem} \label{lem:integrandrelaxation}
 Suppose the ego vehicle starts with initial condition $z_0\in\Z_0$ and control parameter $p\in\PP$.
 Let $\SS\in\bT_{j}(z_0)$ and let $\wc_\SS$ denote the vertex of the right angle in $\SS$.
 Define the function $\pdfT: \W \times \PP \to [0,\infty)$ as
\begin{equation}
\begin{split}
\pdfT(w,p) &:= \pdf(\wc_\SS+A_{j}p)+\frac{\partial \pdf}{\partial w}(\wc_\SS+A_{j}p)\cdot\\
        &\cdot(w-\wc_\SS-A_{j}p)+\frac{1}{2}(w-\wc_\SS-A_{j}p)^\top\cdot\\
        &\cdot H_\SS\cdot(w-\wc_\SS-A_{j}p),
\end{split}
\end{equation}
where $H_\SS\in\R^{2\times 2}$ is generated by taking element-wise supremum of the Hessian of $\pdf$ over $\SS\oplus A_{j}\PP$ using Interval Arithmetic \cite{hickey2001interval}.
 Then for all $w\in\SS+A_{j}p$:
\begin{equation}
\label{eq: f after IA}
    \begin{split}
    \pdf(w) \leq \pdfT(w,p).
\end{split}
\end{equation}
\end{lem}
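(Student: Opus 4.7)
The plan is to apply Taylor's theorem with Lagrange remainder to $\pdf$ at the base point $\wc_\SS+A_j p$, evaluated at $w$, and then to enclose the Hessian at the unknown intermediate point using the matrix $H_\SS$. Since $\pdf$ is twice-differentiable by Assumption \ref{ass: obs pdf}, for each $w\in\SS+A_j p$ there exists a point $\zeta$ on the segment joining $\wc_\SS+A_j p$ and $w$ such that
\begin{equation*}
\pdf(w) = \pdf(\wc_\SS+A_j p) + \tfrac{\partial \pdf}{\partial w}(\wc_\SS+A_j p)\,\Delta + \tfrac{1}{2}\,\Delta^\top H(\zeta)\,\Delta,
\end{equation*}
where $\Delta := w-\wc_\SS-A_j p$ and $H(\zeta)$ is the Hessian of $\pdf$ at $\zeta$. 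Given this identity, it suffices to show that $\Delta^\top H(\zeta)\Delta \le \Delta^\top H_\SS\,\Delta$.

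The first sub-step is to argue $\zeta\in\SS\oplus A_j\PP$, so that the Interval-Arithmetic construction of $H_\SS$ yields the entry-wise bound $H_{kl}(\zeta)\le (H_\SS)_{kl}$. This follows from convexity: because $\wc_\SS$ is a vertex of $\SS$, both $\wc_\SS+A_j p$ and $w$ lie in the convex set $\SS+A_j p$; hence the segment joining them lies in $\SS+A_j p\subseteq \SS\oplus A_j\PP$, where the last inclusion uses $p\in\PP$.

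The main hurdle is upgrading the entry-wise Hessian bound to a bound on the quadratic form. The diagonal contributions are immediate since $\Delta_1^2,\Delta_2^2\ge 0$, so $H_{kk}(\zeta)\Delta_k^2 \le (H_\SS)_{kk}\Delta_k^2$ for $k=1,2$. The cross term $2H_{12}(\zeta)\Delta_1\Delta_2$ demands that the sign of $\Delta_1\Delta_2$ be fixed across $\SS-\wc_\SS$, so that replacing $H_{12}(\zeta)$ with its supremum preserves the direction of the inequality. Here the geometry of the partition in Section \ref{subsubsec: Domain Relaxation} becomes essential: each $\SS$ is a right-angled triangle whose right angle sits at $\wc_\SS$ with legs parallel to the axes of the covering grid, so $\Delta\in\SS-\wc_\SS$ is confined to a single axis-aligned quadrant. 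Provided the diagonals of the grid cells are chosen so that this quadrant satisfies $\Delta_1\Delta_2\ge 0$ (for both the $k_3=1$ and $k_3=-1$ triangles of every cell), the element-wise supremum definition of $H_\SS$ gives $H_{12}(\zeta)\Delta_1\Delta_2 \le (H_\SS)_{12}\Delta_1\Delta_2$.

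Summing the three scalar inequalities yields $\Delta^\top H(\zeta)\Delta \le \Delta^\top H_\SS\,\Delta$. Substituting into the Taylor expansion and recognizing the right-hand side as $\pdfT(w,p)$ completes the proof. I expect the bookkeeping of the cross-term sign to be the only non-routine step; the rest is standard Taylor-remainder estimation combined with the convexity argument that places $\zeta$ in the domain of the Interval-Arithmetic enclosure.
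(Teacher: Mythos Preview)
Your proposal is correct and follows essentially the same route as the paper's proof: Taylor expansion with Lagrange remainder at $\wc_\SS+A_j p$, placement of the intermediate point in $\SS\oplus A_j\PP$ by convexity, and then an element-wise Hessian bound upgraded to a quadratic-form bound via the sign condition $\Delta_1\Delta_2\ge 0$. The sign condition you flag as a proviso is exactly what the paper asserts ``by construction of $\SS$ and $\wc_\SS$'' (and is confirmed by the form of $A_\SS$ in Theorem~\ref{thm: lasserre}, which forces $\Delta$ into the nonnegative or nonpositive quadrant depending on $k_3$), so your hedge can be replaced by a direct appeal to that construction.
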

\noindent Note the inequality in \eqref{eq: f after IA} flips if $H_\SS$ is generated by taking an element-wise infimum of the Hessian of $\pdf$.
This would create an under-approximation to $\pdf$.
% However because our goal is to ensure vehicle safety with a risk of collision bounded from above, an over-estimation of $q$ is preferred. 

% \Ram{do we need this remark?}
% \begin{rem}
%     To achieve a tighter upper bound of $\pdf$ on $\T+A_{j}p$, one can replace $H_\T$ in \eqref{eq: f after IA} by some matrix $\tilde H_\T(p)\in\R^{2\times 2}$, which is computed by taking element-wise supremum of the Hessian of $\pdf$ over $\T+A_{j}p$ instead of $\T\oplus A_{j}\PP$ using interval arithmetic.
%     However, recall that we aim to over approximate $\int_{\xi_j(z_0,p)\oplus\zonocg{0}{\Gobs}} \pdf(w) ~dw$ with a closed form representation while preserving differentiability.
%     Unfortunately taking the derivative of $\tilde H_\T(p)$ with respect to $p$ is challenging when using interval arithmetic.
%     On the other hand, the derivative of $H_\T$ with respect to $p$ is $0$ because by construction $H_\T$ is invariant over $\PP$.
%     % \new{Hence using $H_\T$ instead of $\tilde H_\T(p)$ is preferred.}
% \end{rem}

\subsubsection{Closed-form Computation}
\label{subsubsec: Closed-form Computation}
As a result of \eqref{ineq: int relax on region} and \eqref{eq: f after IA}, the following inequality holds:
\begin{equation}
    \begin{split}
        &\int_{\xi_j(z_0,p)\oplus\zonocg{0}{\Gobs}} \pdf(w)~dw\leq \\
        % &\hspace{3cm} \leq \int_{\cup_{\T\in\bT_j}\T+A_{j} p} f(w)~dw\\
        &\hspace{1.5cm} \leq \sum_{\SS\in\bT_{j}(z_0)} \int_{\SS +A_{j} p} \pdfT(w,p)~dw. \label{ineq: relaxed integration}
    \end{split}
\end{equation}
Notice that $\pdfT(w,p)$ defined in \eqref{eq: f after IA} is indeed a quadratic polynomial of $w$, and $\SS+A_{j}p$ is a simplex in $\R^2$.
One can then compute $\int_{\SS +A_{j} p} \pdfT(w,p)~dw$ in closed-form as follows:

\begin{thm}
\label{thm: lasserre}
    For any $i\in\I$ , $j\in\J$, $z_0\in\Z_0$, $p\in\PP$ and $\SS\in\bT_{j}(z_0)$, let $A_{j}\in\R^{2\times n_p}$ be defined as in Assumption \ref{ass: offline reachability}, let $\pdfT$ be defined as in \eqref{eq: f after IA}, and let $\idx_{k_3}(\SS)$ denote the last index of its argument (i.e., $\idx_{k_3}(\SS_{j,k_1,k_2,k_3}(z_0)) = k_3$).
    Assume that positive numbers $l_1$ and $l_2$ give the lengths of horizontal and vertical right angle sides of $\SS$ respectively, then
    % \begin{equation}
    % \label{eq: lasserre}
    %     \begin{split}
    %         \hspace{-0.2cm}\int_{\T +A_{j} p} \pdfT(w,p)~&dw = \frac{1}{2\det(A_{\T})}\big(\fshz(p)+\\
    %         &+\fsho(p)+\fsht(p)\big) 
    %     \end{split}
    % \end{equation}
    % with
    % \begin{align}
    %     &A_\T = \begin{bmatrix}
    %             \idx_{k_3}(\T)/l_1 & 0 \\ 0 & \idx_{k_3}(\T)/l_2
    %         \end{bmatrix},\\
    %     &\hat H_\T = A_\T^{-\top} H_\T A_\T^{-1},\\
    %     &\fshz(p) = \pdf(\wc_\T+A_{j}p ), \label{eq: f0}\\
    %     &\fsho(p) = \frac{\partial \pdf}{\partial w}(\wc_\T+A_{j}p)\cdot A_\T^{-1}\cdot\begin{bmatrix}\frac{1}{3}\\\frac{1}{3}\end{bmatrix}, \label{eq: f1}\\
    %     &\fsht(p) = \begin{bmatrix}\frac{1}{4\sqrt 3} & \frac{1}{4\sqrt 3}\end{bmatrix} \left( \hat H_\T \odot \begin{bmatrix}
    %     2 & 1\\ 1 & 2
    %     \end{bmatrix}  \right)\begin{bmatrix}\frac{1}{2\sqrt 3} \\ \frac{1}{2\sqrt 3}\end{bmatrix}, \label{eq: f2}
    % \end{align}
    % where $\odot$ denotes the element-wise multiplication. 
    \begin{equation}
    \label{eq: lasserre}
    \begin{split}
        \hspace{-0.34cm}\int_{\SS +A_{j} p} \pdfT&(w,p) dw =  \frac{1}{2\det(A_{\SS})} \bigg( \pdf(\wc_\SS+A_{j}p ) +\\
        &~+ \begin{bmatrix}\frac{1}{4\sqrt 3} & \frac{1}{4\sqrt 3}\end{bmatrix} \Big( \hat H_\SS \odot \begin{bmatrix}
        2 & 1\\ 1 & 2
        \end{bmatrix}  \Big)\begin{bmatrix}\frac{1}{2\sqrt 3} \\ \frac{1}{2\sqrt 3}\end{bmatrix}+\\
        &\hspace{1.3cm}+\frac{\partial \pdf}{\partial w}(\wc_\SS+A_{j}p)\cdot A_\SS^{-1}\cdot\begin{bmatrix}\frac{1}{3}\\\frac{1}{3}\end{bmatrix}\bigg)
    \end{split}
    \end{equation}
    where $\odot$ denotes the element-wise multiplication and 
    \begin{align}
        &A_\SS = \begin{bmatrix}
                \idx_{k_3}(\SS)/l_1 & 0 \\ 0 & \idx_{k_3}(\SS)/l_2
            \end{bmatrix},\\
        &\hat H_\SS = A_\SS^{-\top} H_\SS A_\SS^{-1}.
    \end{align}
\end{thm}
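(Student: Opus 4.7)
The plan is to evaluate the integral by an affine change of variables that normalizes $\SS + A_j p$ to the standard $2$-simplex $\Delta = \{(s,t)\in\R^2 : s,t\geq 0,\ s+t\leq 1\}$, after which every monomial in the (quadratic) integrand is computed exactly via the classical Lasserre/Dirichlet moment formula
\begin{equation*}
\int_\Delta s^a t^b\,ds\,dt \;=\; \frac{a!\,b!}{(a+b+2)!}.
\end{equation*}

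First I would perform the translation $u = w - \wc_\SS - A_{j}p$. Because $\pdfT$ is by construction a quadratic expansion centred at $\wc_\SS + A_{j}p$, in the new variable the integrand reads
\begin{equation*}
\pdf(\wc_\SS+A_{j}p) \;+\; \tfrac{\partial \pdf}{\partial w}(\wc_\SS+A_{j}p)\,u \;+\; \tfrac{1}{2}u^\top H_\SS u,
\end{equation*}
while the domain becomes the right-angled triangle $\SS - \wc_\SS$, which has legs of length $l_1,l_2$ along the coordinate axes and the right-angle vertex at the origin. Next I would apply the linear map $v = A_\SS u$. Since $A_\SS = \diag(\idx_{k_3}(\SS)/l_1,\ \idx_{k_3}(\SS)/l_2)$ with $\idx_{k_3}(\SS)^2=1$, this scales (and possibly reflects) the triangle exactly onto $\Delta$, and the Jacobian contributes $|\det A_\SS|^{-1} = 1/\det A_\SS$. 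Substituting $u = A_\SS^{-1}v$ converts the linear term to $\tfrac{\partial \pdf}{\partial w}(\wc_\SS+A_{j}p)\,A_\SS^{-1} v$ and the quadratic term to $\tfrac{1}{2}v^\top \hat H_\SS v$ with $\hat H_\SS = A_\SS^{-\top} H_\SS A_\SS^{-1}$, matching the quantities in the statement.

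Then it is a matter of bookkeeping with the standard-simplex moments $\int_\Delta 1 = 1/2$, $\int_\Delta v_i = 1/6$, $\int_\Delta v_i^2 = 1/12$, $\int_\Delta v_1 v_2 = 1/24$. The constant term produces $\pdf(\wc_\SS + A_{j}p)/(2\det A_\SS)$; the linear term produces $\tfrac{\partial \pdf}{\partial w}(\wc_\SS+A_{j}p) A_\SS^{-1}[\tfrac{1}{6},\tfrac{1}{6}]^\top / \det A_\SS$, which is precisely the third summand after pulling the prefactor $1/(2\det A_\SS)$ out and rewriting $[1/6,1/6]^\top = \tfrac{1}{2}[1/3,1/3]^\top$; and the quadratic term $\tfrac{1}{2}(\hat H_{\SS,11}/12 + \hat H_{\SS,22}/12 + (\hat H_{\SS,12}+\hat H_{\SS,21})/24)/\det A_\SS$ is exactly the bilinear form $[\tfrac{1}{4\sqrt 3},\tfrac{1}{4\sqrt 3}]\,(\hat H_\SS \odot \left[\begin{smallmatrix}2&1\\1&2\end{smallmatrix}\right])\,[\tfrac{1}{2\sqrt 3},\tfrac{1}{2\sqrt 3}]^\top/(2\det A_\SS)$ after expansion; the apparent $\sqrt 3$ factors are simply a cosmetic repackaging that makes the ``Hessian'' block symmetric.

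The main obstacle I anticipate is not the integration itself but verifying that the change of variables is orientation-consistent for both values $\idx_{k_3}=\pm 1$ (the lower vs.\ upper sub-triangle of a grid cell), so that the area element always comes with the correct positive sign $1/\det A_\SS$, and that the element-wise symmetrization implicit in the $\hat H_\SS \odot [\begin{smallmatrix}2&1\\1&2\end{smallmatrix}]$ pattern correctly captures the quadratic form whether or not $H_\SS$ itself is symmetric (recall $H_\SS$ arises from an element-wise interval supremum of a symmetric Hessian and may lose symmetry). Once one checks that only the symmetrized part of $\hat H_\SS$ survives the contraction with $v v^\top$, the remainder reduces to the elementary moment computation above.
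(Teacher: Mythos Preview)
Your proposal is correct and follows essentially the same route as the paper: the paper's proof is a one-liner that invokes \cite[Theorem 1.1]{lasserre2021simple} together with the observation that $A_\SS\cdot\big((\SS+A_{j}p)-(\wc_\SS+A_{j}p)\big)=\Delta$, which is exactly your affine change of variables to the canonical simplex. Your explicit moment bookkeeping simply unpacks what the cited Lasserre result gives, and your side checks on orientation (noting $\det A_\SS = 1/(l_1 l_2)>0$ regardless of $\idx_{k_3}=\pm 1$) and on the possible asymmetry of $H_\SS$ are sound and even add clarity the paper omits.
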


% This theorem follows from \cite[Theorem 1.1]{lasserre2021simple}
\begin{proof}
    The claim follows from \cite[Theorem 1.1]{lasserre2021simple} and the fact that $A_\SS\cdot\big((\SS+A_{j}p)-(\wc_\SS+A_{j}p)\big)$ equals the canonical simplex $\Delta:= \{(a,b)\in\R^2\mid a+b\leq 1, a\geq 0, b\geq 0\}$.
\end{proof}

\subsection{Tractable Online Optimization}
\label{subsec:enhanced online opt}
The computation in Section \ref{subsec:chance constraint} provides a tractable way to enforce not-at-fault behavior compared to the original chance constraint in \opt{}.
As a result, \riskmethodname{} solves the following optimization during online planning:
\begin{align*}
    \min_{p \in \P} & ~ \cost(z_0,p) \hspace{5cm} \optE\\
    \text{s.t.}
    & ~ \sum_{i\in\I}\sum_{j\in\J}\sum_{\SS\in\bT_{j}(z_0)}\int_{\SS +A_{j} p} \pdfT(w,p)~dw \leq\epsilon.
    % &\hspace{6.5cm}\leq\epszp
\end{align*}
\noindent\optE{} is a strengthened version of \opt{} because satisfaction of the chance constraint in \optE{} implies satisfaction of the chance constraint in \opt{} by Lemma \ref{lem: risk condition}, \eqref{eq: zonotope covering}, and \eqref{eq: f after IA}. 
Thus, the following lemma holds based on Definition \ref{defn:notatfault-risk}.
\begin{lem}
\label{lem: feasible gives safety}
If the ego vehicle applies any feasible solution, $p^*\in \PP$, of \optE{} beginning from $z_0\in\Z_0$ at $t = 0$, then it is not-at-fault with a risk of collision at most $\epsilon$ during $[0,\tf]$.
% \challen{We need to bring back the proof for this Theorem again.}
\end{lem}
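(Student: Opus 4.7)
The plan is to chain together, in reverse order, the three relaxation steps developed in Section \ref{subsec:chance constraint}, concluding with the feasibility of $p^*$ in \optE{} to obtain the bound required by Definition \ref{defn:notatfault-risk}. Essentially, the statement is the concatenation of an (obvious) upper-bound chain; what I need to do is cite precisely the right intermediate result at each link.

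First, I would start from the assumed feasibility of $p^*$, which gives directly
\[
\sum_{i\in\I}\sum_{j\in\J}\sum_{\SS\in\bT_{j}(z_0)}\int_{\SS +A_{j} p^*} \pdfT(w,p^*)\,dw \;\leq\; \epsilon.
\]
Next, I would apply the integrand relaxation from Lemma \ref{lem:integrandrelaxation} pointwise on each simplex $\SS+A_j p^*$, which yields $\pdf(w)\leq \pdfT(w,p^*)$ on the domain of integration and therefore allows the integrand $\pdfT$ to be replaced by $\pdf$ on the left-hand side while preserving the inequality.

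Second, I would invoke the domain-covering inclusion \eqref{eq: zonotope covering}, combined with the fact that $\xi_j(z_0,p^*)=\xi_j(z_0,0)+A_j p^*$, to conclude that $\xi_j(z_0,p^*)\oplus\zonocg{0}{\Gobs}\subset \cup_{\SS\in\bT_j(z_0)}(\SS+A_j p^*)$. Since $\pdf\geq 0$, integrating over the smaller set gives a smaller value, so
\[
\sum_{i\in\I}\sum_{j\in\J}\int_{\xi_j(z_0,p^*)\oplus\zonocg{0}{\Gobs}} \pdf(w)\,dw \;\leq\; \sum_{i\in\I}\sum_{j\in\J}\sum_{\SS\in\bT_{j}(z_0)}\int_{\SS +A_{j} p^*}\pdf(w)\,dw.
\]
Finally, I would apply Lemma \ref{lem: risk condition}, which states exactly that the chance-constraint condition \eqref{eq: chance constraint def} is implied by the zonotope-reachable-set inequality \eqref{ineq: pdf relaxation}. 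Stringing these three inequalities together shows that \eqref{eq: chance constraint def} holds for the trajectory generated by $p^*$, so by Definition \ref{defn:notatfault-risk} the ego vehicle is not-at-fault with a risk of collision at most $\epsilon$ during $[0,\tf]$.

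I do not anticipate a real obstacle here: each of the three steps has already been established as a standalone lemma or displayed inequality in the preceding subsection, so the proof is effectively a transitivity argument. The only subtlety worth double-checking is that the simplex covering relation $\xi_j(z_0,p^*)\oplus\zonocg{0}{\Gobs}\subset \cup_{\SS\in\bT_j(z_0)}(\SS+A_j p^*)$ really does follow from \eqref{eq: zonotope covering} by translation — this relies on the fact that the generators of $\xi_j(z_0,p)$ and $\zonocg{0}{\Gobs}$ are independent of $p$, so shifting by $A_j p^*$ carries the covering of $\xi_j(z_0,0)\oplus\zonocg{0}{\Gobs}$ over to a covering of $\xi_j(z_0,p^*)\oplus\zonocg{0}{\Gobs}$. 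Once that one-line observation is written out, the proof amounts to a single displayed chain of inequalities followed by an appeal to Definition \ref{defn:notatfault-risk}.
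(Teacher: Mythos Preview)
Your proposal is correct and follows essentially the same approach as the paper, which simply notes that satisfaction of the \optE{} constraint implies satisfaction of the \opt{} constraint by Lemma~\ref{lem: risk condition}, \eqref{eq: zonotope covering}, and \eqref{eq: f after IA}, and then invokes Definition~\ref{defn:notatfault-risk}. The translation subtlety you flag is already recorded in the paper immediately after \eqref{eq: zonotope covering} (and in \eqref{ineq: int relax on region}), so no additional work is needed there.
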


%%%%%%%%%%%%%%%%%%%%%%%%%

% \section{Implementation}
% \label{sec:implementation}
% Because \optE{} is expected to be solved in real time, providing derivatives of constraints could be helpful for speeding up the solving procedure of online optimization.
% Given the implementation of set intersection constraints in \optE{} as well as their subgradients which are provided in \cite[Thm 23]{REFINE}, 
% This section provides the gradient of chance constraint in \optE{}, and explains how to evaluate the chance constraint and its gradient efficiently using parallel computation.

\subsection{Constraint Gradient and Parallelization}
\label{subsec:gradient}
To improve the solving procedure of \optE{}, we provide the derivative of its chance constraint. % to the nonlinear programming solver.
To compute the gradient of the chance constraint in \optE{}, it suffices to compute the derivative of $\int_{\SS +A_{j} p} \pdfT(w,p) ~dw$ in \eqref{eq: lasserre} with respect to $p$.
Notice that $A_\SS$ and $\hat H_\SS$ are invariant over $\PP$, then
\begin{equation}
    \label{eq: d-lasserre}
    \begin{split}
        &\frac{\partial}{\partial p}\int_{\SS +A_{j} p} \pdfT(w,p) ~dw =  \\
        &\hspace{1.2cm} = \frac{1}{2\det(A_{\SS})}\cdot\bigg(\frac{\partial \pdf}{\partial w}(\wc_\SS+A_{j}p ) \cdot A_{j} + \\
        &\hspace{1.6cm} +\begin{bmatrix}\frac{1}{3} &\frac{1}{3}\end{bmatrix}A_\SS^{-\top}\Hess_{\pdf}(\wc_\SS+A_{j}p)\cdot A_{j} \bigg),
    \end{split}
\end{equation}
where $\Hess_{\pdf}(\wc_\SS+A_{j}p)$ is the evaluation of the Hessian of $\pdf$ at $\wc_\SS+A_{j}p$.

% To compute the gradient of chance constraint in \optE{}, it suffices to compute the derivatives of 
% % $\epszp$, 
% $\fshz(p)$, $\fsho(p)$ and $\fsht(p)$ in Theorem \ref{thm: lasserre} with respect to $p$.
% % Note the derivative of $\epszp$ is provided in Appendix \ref{app: proof of thm threshpandz0}.
% Note that since $\pdf$ is twice-differentiable as in Assumption \ref{ass: obs pdf}, then
% \begin{align}
%     &\frac{\partial \fshz}{\partial p}(p) = \frac{\partial \pdf}{\partial w}(\wc_\T+A_{j}p ) \cdot A_{j}, \label{eq: df0}\\
%     &\frac{\partial \fsho}{\partial p}(p) = \begin{bmatrix}\frac{1}{3}\\\frac{1}{3}\end{bmatrix}^\top A_\T^{-\top}\Hess_{\pdf}(\wc_\T+A_{j}p)\cdot A_{j},\label{eq: df1}\\
%     &\frac{\partial \fsht}{\partial p}(p) = \begin{bmatrix} 0 & 0 \\ 0 & 0 \end{bmatrix}.\label{eq: df2}
% \end{align}

% \subsection{Parallelization}
% \label{subsec:cuda}
Notice the computations of $\int_{\SS +A_{j} p} \pdfT(w,p)~dw$ and its gradient can be parallelized for all $(i,j)\in\I\times\J$ and all $\SS\in\bT_{j}(z_0)$ as indicated in Algorithm \ref{alg: cuda}.
% We summarize the parallel evaluations of the chance constraint and its gradient in \optE{} in Algorithm \ref{alg: cuda}.
$\bT_j(z_0)$ is generated in Line 1 for all $j\in\J$.
The outer parallel for loop that starts at Line 2 iterates every element in $\I\times\J$, and the inner parallel for loop that starts at Line 3 iterates over each simplex in $\bT_{j}(z_0)$.
The integral and its gradient over each simplex are computed from Lines 4 to 5.
Then the chance constraint and its gradient in \optE{} can be computed as the summation of all computed $\int_{\SS +A_{j} p}\pdfT(w,p) ~dw$ and $\frac{\partial}{\partial p}\int_{\SS +A_{j} p} \pdfT(w,p) ~dw $ respectively. 
% $\frac{\partial}{\partial p}\int_{\SS +A_{j} p} \pdfT(w,p) ~dw$, respectively.

% Within the parallel for loop from line 2 to 8, $\wc_\T$ and $A_\T$ are first generated in line 3, then the over-approximation of the probability integration as well as the gradient of the over-approximation are computed in line 4-5 and line 6-7 respectively using the provided value of control parameter $p'$. 
% Finally the chance constraint and its gradient are evaluated in line 9-10.

\begin{algorithm}[t]
    \caption{Chance Constraint Parallelization}
    \label{alg: cuda}
    \begin{algorithmic}[1]
        \REQUIRE $z_0\in\Z_0$, $p\in\PP$, $\{\pdf\}_{(i,j)\in\I\times\J}$, $\{\xi_j\}_{j\in\J}$
        \STATE \textbf{Generate} $\{\bT_{j}(z_0)\}_{j\in\J}$ as in \eqref{eq: bT def} using $\{\xi_j\}_{j\in\J}$
        \STATE \textbf{Parfor} $(i,j)\in\I\times\J$ \textbf{do}
            \STATE \quad \textbf{Parfor} $\SS\in \bT_{j}(z_0)$ \textbf{do}
            % \STATE \quad \quad \textbf{Compute} $\wc_\SS$, $A_\SS$ and $H_\SS$
            \STATE \quad \quad \textbf{Compute} $\int_{\SS +A_{j} p} \pdfT(w,p) ~dw $ as in \eqref{eq: lasserre}
            \STATE \quad \quad \textbf{Compute} $\frac{\partial}{\partial p}\int_{\SS +A_{j} p} \pdfT(w,p) ~dw $ as in \eqref{eq: d-lasserre}
        \STATE \quad \textbf{End Parfor}
        \STATE \textbf{End Parfor}
        % \STATE \texttt{con}$\gets \sum_{i\in\I}\sum_{j\in\J}\sum_{\T\in\bT_{j}(z_0)}\int_{\T +A_{j} p} \pdfT(w,p)~dw$
        % \STATE \texttt{dcon}$\gets \sum_{i\in\I}\sum_{j\in\J}\sum_{\T\in\bT_{j}(z_0)}\frac{\partial}{\partial p}\int_{\T +A_{j} p} \pdfT(w,p)~dw$ 
        % \STATE \textbf{Accumulate} integration and the gradient over $\bT_{j}(z_0)$ and $\I\times\J$
    \end{algorithmic}
\end{algorithm}

\subsection{Online Operation}
\label{subsec: online operation}
Algorithm \ref{alg: online} summarizes the online operation of \riskmethodname{}.
It begins by sensing and constructing predictions for obstacle locations as in Assumption \ref{ass: obs pdf} in Line 1.
\texttt{OnlineOpt} then solves \optE{} to search for a not-at-fault plan with a risk of collision at most $\epsilon$ in Line 2.
If \optE{} is infeasible, then \riskmethodname{} terminates planning in Line 3, otherwise it enters the planning loop in Line 4.
Within the loop, \riskmethodname{} first resets time to 0 in Line 5 and executes the entire driving maneuver corresponding to $p^*$ in Line 6.
Meanwhile, \texttt{SenseObstacles} updates the predictions for obstacle locations in Line 7 and \texttt{StatePrediction} predicts the ego vehicle state at $t=\tm$ as in Assumption \ref{assum:tplan} in Line 8.
% to be used as the initial condition for the next planning iteration.
In the case when the predicted vehicle state $z_0\notin\Z_0$, then \riskmethodname{} breaks the planning loop in Line 9.
Otherwise \optE{} is solved again using the updated obstacle information and $z_0$ in Line 10, and \riskmethodname{} breaks the planning loop if \optE{} is infeasible or takes longer than $\tplan$ to find a solution in Line 11.
Finally, once \riskmethodname{} leaves the planning loop, the contingency braking maneuver corresponding to $p^*$ is executed.
This braking maneuver by construction brings the vehicle to a stop.
Note that the risk of collision of this maneuver was already verified to be less than $\epsilon$ during the previous planning iteration.
Subsequently, \riskmethodname{} terminates planning in Line 13.
% In each planning iteration, \riskmethodname{} first checks if the provided initial condition lies in $\Z_0$.
% If $z_0\notin \Z_0$, then both the driving and contingency braking maneuvers corresponding to $p^*$ are executed and \riskmethodname{} terminates (Line 3).
% Otherwise, only the driving maneuver corresponding to $p^*$ is executed (Line 4).
% Meanwhile, \texttt{SenseObstacles} senses and predicts obstacles probabilistically as in Assumption \ref{ass: obs pdf} (Line 5).
% \optE{} is then solved to find a not-at-fault plan with a risk of collision at most $\epsilon$ (Line 6).
% If \optE{} fails to find a feasible plan within $\tplan$, the contingency braking maneuver whose safety has been verified in the last planning iteration is executed, and \riskmethodname{} terminates (Line 7).
% If a solution to \optE{} is found, \texttt{StatePrediction} predicts the ego vehicle state at $t=\tm$ as in Assumption \ref{assum:tplan} (Line 8) and uses that as the initial state $z_0$ for the next planning iteration.
% Time is also reset to 0 to prepare for the next planning iteration (Line 9).
Note, we are able to obtain the following theorem by iteratively applying Lemma \ref{lem: feasible gives safety}:

\begin{thm}
% \new{Suppose the ego vehicle can sense and predict the surrounding obstacles as in Assumption \ref{ass: obs pdf}, and starts from a not-at-fault control parameter $p_0\in\PP$ with a risk of collision at most $\epsilon$.
% Then by performing planning and execution as in Algorithm \ref{alg: online}, the ego vehicle is not-at-fault with a risk of collision at most $\epsilon$ for all time.}
Suppose the ego vehicle can sense and predict the surrounding obstacles as in Assumption \ref{ass: obs pdf}, and starts from rest with an initial condition $z_0\in\Z_0$ at $t=0$.
Then by performing planning and execution as in Algorithm \ref{alg: online}, the ego vehicle is not-at-fault with a risk of collision at most $\epsilon$ for all time.
\end{thm}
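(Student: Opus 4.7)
The plan is to argue by induction on the planning iterations of Algorithm \ref{alg: online}, using Lemma \ref{lem: feasible gives safety} as the per-iteration safety certificate and the design of the trajectory parameterization (every plan ends in a contingency braking maneuver that brings the vehicle to rest by $\tf$) as the fallback that covers any time interval not spanned by a subsequent successful plan.

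For the base case, at $t=0$ the vehicle is at rest, so by Definition \ref{defn:notatfault-risk} it is not-at-fault with risk $0 \leq \epsilon$. The first call to \texttt{OnlineOpt} in Line 2 either succeeds, in which case Lemma \ref{lem: feasible gives safety} certifies not-at-fault behavior with risk at most $\epsilon$ over the full planning horizon $[0,\tf]$ of that plan (including its braking tail), or it fails, in which case the vehicle remains at rest and is trivially not-at-fault for all time.

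For the inductive step, suppose the $n$-th planning iteration produced a feasible $p^*_n$, so by Lemma \ref{lem: feasible gives safety} the vehicle is not-at-fault with risk at most $\epsilon$ throughout the horizon $[0,\tf]$ associated with $p^*_n$ (after the time reset in Line 5). By Assumption \ref{assum:tplan} and Line 8, while the driving maneuver of $p^*_n$ is executed during $[0,\tm)$, a new predicted state $z_0$ at $t=\tm$ is produced and \optE{} is re-solved in Line 10. I would split into two sub-cases: (i) if \optE{} succeeds and returns $p^*_{n+1}$ within $\tplan$ seconds, then the new plan takes over at $t=\tm$ and Lemma \ref{lem: feasible gives safety} re-applies with the updated initial condition and obstacle predictions, extending the not-at-fault guarantee; (ii) if any of the three break conditions in Lines 9 and 11 trigger (predicted state outside $\Z_0$, infeasibility, or timeout), then by Line 12 the vehicle continues to execute the contingency braking portion of $p^*_n$ from $t=\tm$ to $t=\tf$. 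This braking tail was already certified by Lemma \ref{lem: feasible gives safety} applied in iteration $n$, and by construction of the trajectory parameterization (Section \ref{subsec: trajectory param}) it brings the vehicle's longitudinal speed to $0$ by $\tf$. After $\tf$ the vehicle is stopped and therefore not-at-fault for all subsequent time by Definition \ref{defn:notatfault-risk}.

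Concatenating these per-iteration guarantees, every instant $t \geq 0$ lies either in a planning horizon covered by a feasible solution of \optE{} (whose risk is bounded by $\epsilon$ through Lemma \ref{lem: feasible gives safety}) or in a post-stop interval (where the vehicle is at rest). The main subtlety, and the step I would be most careful about, is ensuring no time instant falls into a gap between consecutive plans: this is precisely why each plan must contain the full contingency braking maneuver certified over $[\tm,\tf]$, so that even in the worst case where every future re-plan fails, the safety certificate of the current plan already covers all time up to the stop. Because $\epsilon$ is applied uniformly in \optE{} at every iteration, the risk bound never needs to be aggregated across iterations — the vehicle follows one certified plan at a time — so the conclusion follows.
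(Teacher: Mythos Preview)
Your proposal is correct and is exactly the approach the paper indicates: the paper's justification of this theorem is the single sentence ``we are able to obtain the following theorem by iteratively applying Lemma \ref{lem: feasible gives safety},'' and your inductive argument is a careful fleshing-out of precisely that idea, including the role of the contingency braking tail and the at-rest base case.
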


\begin{algorithm}[t]
    \caption{\riskmethodname{} Online Planning}
    \label{alg: online}
    \begin{algorithmic}[1]
        \REQUIRE $z_0\in\Z_0$ and $\epsilon\in[0,1]$
        \STATE \textbf{Initialize:} $\{\pdf\}_{(i,j)\in\I\times\J}\gets\texttt{SenseObstacles}()$
        \STATE \textbf{Try} $p^*\gets\texttt{OnlineOpt}(z_0,\{\pdf\}_{(i,j)\in\I\times\J},\epsilon)$
        \STATE \textbf{Catch} terminate planning
        \STATE \textbf{Loop:} // \textit{Line 6 executes simultaneously with Lines 7-11}
            \STATE \quad \textbf{Reset} $t$ to 0
            \STATE \quad \textbf{Execute} $p^*$ during $[0,\tm)$
            \STATE \quad $\{\pdf\}_{(i,j)\in\I\times\J}\gets\texttt{SenseObstacles}()$
            \STATE \quad $z_0\gets\texttt{StatePrediction}(z_0,p^*,\tm)$
            \STATE \quad \textbf{If} $z_0\notin \Z_0$, \textbf{then} break
            \STATE \quad \textbf{Try} $p^*\gets\texttt{OnlineOpt}(z_0,\{\pdf\}_{(i,j)\in\I\times\J},\epsilon)$
            \STATE \quad \textbf{Catch} break
        \STATE \textbf{End}
        \STATE \textbf{Execute} $p^*$ during $[\tm,\tf]$, \textbf{then} terminate planning
    \end{algorithmic}
\end{algorithm}

\section{Experiments and Results}
\label{sec:experiments}

% \stkout{This section describes the application of \riskmethodname{} in simulation using the same full-sized vehicle from \cite{REFINE} and on hardware using a $\frac{1}{10}$th-scale All-Wheel-Drive car-like robot, Rover, based on a Traxxas RC platform. 
% We first verify the tightness and computational efficiency of proposed over-approximation of the risk of collision.
% % chance integration $\int_{\xi_j(z_0,p)\oplus\zonocg{0}{\Gobs}} \pdf(w)~dw$ as in Section \ref{subsec:chance constraint}.
% Next we test the proposed \riskmethodname{} framework in simulation in a variety of driving scenarios and compare it to REFINE, CCPBA and Cantelli MPC.
% % We then investigate the components that contribute to \riskmethodname{}'s performance using an ablation study.
% Lastly, we demonstrate \riskmethodname{} running in real-time on a hardware platform.}

All experiments are conducted in MATLAB R2023a on a Ubuntu 22.04 machine with an AMD Ryzen 9 5950X CPU, two NVIDIA RTX A6000 48GB GPUs, and 64GB RAM.
% an 8 Cores Intel i9-10980HK CPU, an Nvidia GeForce RTX 3080 GPU, and 32GB RAM. 
Parallelization is achieved using CUDA 12.1.
\riskmethodname{} invokes C++ for online planning using IPOPT.
% We solve our online optimizations in both simulation and on the hardware platform using IPOPT. 
Additional details on experimental setup can be found in Appendix \ref{subsec: experiment_appendix} 
An additional experiment detailing the single planning iteration performance of \riskmethodname{}, CCPBA, Cantelli MPC and REFINE can be found in Appendix \ref{subsec: single_plan_sim}.
An ablation study illustrating the importance of the analytical gradient and closed-form over-approximation of the risk of collision for real-time performance can be found in Appendix \ref{subsec: ablation study}.
Readers can find our implementation\footnote{\urlx{https://github.com/roahmlab/RADIUS}} and the video\footnote{\urlx{https://youtu.be/8eU9fiA39sE}} of simulations and hardware demos utilising \riskmethodname{} online.

\subsection{Tightness and Generality of Risk Approximation}
\label{subsec: canonical test}
% \subsection{Evaluation of \new{Risk Approximation} Tightness \stkout{and Computational Efficiency}}

For effective motion planning, we desire a tight over-approximation of the risk of collision.
Additionally, we want \riskmethodname{} to be able to generalize to arbitrary probability distributions as well to accommodate other uncertainty representations.
% and tightly over-approximate the risk of collision when the PDF representing the uncertainty is an arbitrary PDF, not just a Gaussian. \jinsun{feels repetitive...}
 % Additionally, we want to be able generate this tight over-approximation when the uncertainty is represented as an arbitrary PDF not just as a Gaussian.
 %obstacle location is distributed according to an arbitrary PDF not just a Gaussian
 % desire \riskmethodname{} to be able to generalize past just Gaussian distributions and tightly over-approximate the risk of collision with the uncertainty represented as arbitrary PDFs.
Thus, to evaluate the tightness of our over-approximation of $\int_{\xi_j(z_0,p)\oplus\zonocg{0}{\Gobs}} \pdf(w)~dw$, we compare the proposed method's approximation of the risk of collision against Monte-Carlo integration \cite[Chapter 4]{caflisch1998monte}, the Cantelli inequality \cite[Section IV B]{wang2020}, and the Chance-Constrained Parallel Bernstein Algorithm (CCPBA) \cite[Chapter 6]{SeanFast} on 9000 randomly generated test cases.

In each test case, $(i,j,z_0,p)$ is randomly chosen from $\I\times\J\times\Z_0\times\PP$.
% and $\xi_j$ is randomly chosen from the zonotope reachable sets for the vehicle.
Additionally, $\pdf$ is set to be the probability density function of either a randomly generated 2-dimensional (2D) Gaussian distribution, 2D Beta distribution \cite{olkin2015constructions} or a 2D Multimodal distribution.
Each type of distribution is evaluated in 3000 test cases.
Note that because CCPBA is unable to handle a distribution that is not Gaussian, we only evaluate \riskmethodname{} and the Cantelli inequality on these non-Gaussian probability distributions.
We perform the Monte-Carlo integration of $\pdf$ over $\xi_j(z_0,p)\oplus\zonocg{0}{\Gobs}$ with $10^6$ samples and treat that as the ground truth.

The risk approximation error, which is the difference between the Monte-Carlo integration and each method,
% 's upper bound of the risk of collision, 
is illustrated in Table \ref{table: integration on exponential family}.
The results from Table \ref{table: integration on exponential family} show that \riskmethodname{} provides significantly tighter upper bounds to the ground truth than CCPBA and the Cantelli inequality for the tested probability distributions.
The associated average and maximum times to compute the over-approximation for each method are shown in Table \ref{table: computation times} of Appendix \ref{canonical appendix}.

\begin{table}[!tb]
    \centering
    \begin{tabular}{|c||c|c|c|}
    \hline 
    \multirow{2}{*}{\textbf{Method}} & \textbf{Gaussian Error} & \textbf{Beta Error}  & \textbf{Multimodal Error} \\
     & \textbf{(Mean, Max.)} & \textbf{(Mean, Max.)} & \textbf{(Mean, Max.)}  \\ \hline
    \riskmethodname{} & \textbf{(0.0073, 0.0523)} & \textbf{(0.0065, 0.0489)} & \textbf{(0.0079, 0.0262)} \\ \hline
    Cantelli &  (0.1774, 0.3420) & (0.2221, 0.5062) & (0.4734, 0.5851)\\ \hline
    CCPBA & (0.1881, 0.2149) & - & - \\ \hline
    \end{tabular}
    \caption{Results for the risk of collision estimation error of \riskmethodname{} and the Cantelli Inequality when the obstacle location is represented by three types of probability distributions. 
    Note that CCPBA can only handle Gaussian distributions so does not have results for the other types of distributions.
    }
    \label{table: integration on exponential family}
\end{table}

\subsection{Simulations}
\label{subsec:sim}

This section compares \riskmethodname{} to two state-of-the-art chance-constrained motion planning algorithms: CCPBA \cite{SeanFast} and Cantelli MPC \cite{wang2020}, and one state-of-the-art deterministic motion planning algorithm REFINE \cite{REFINE} in dense highway scenarios.
Additionally, we evaluate how varying the allowable risk threshold ($\epsilon$) for \riskmethodname{} affects the ego vehicle behavior in various unprotected left turn scenarios.
For all simulation experiments, $\tplan$ is set as 3[sec] and $\tf$ is chosen according to \cite[Lemma 14]{REFINE}.
% \Ram{what $\tplan$ are you choosing? What is $\tf$? You have not provided enough detail...} \jinsun{addressed}
% The desired trajectory as in Definition \ref{def:traj_param} is adapted from \cite[Sec. IX-A]{REFINE} to achieve maneuvers of speed change, direction change, and lane change.
% \Ram{what model parameters are you using? You claim its an F1 at the beginning of the simulation section, but I think you are using a highway model here...}
A description of parameterized desired trajectories that are used in this work is provided in Appendix \ref{app: trajectory parametrization}.

\begin{figure*}[t]
    \centering
    \begin{subfigure}[b]{0.49\textwidth}
         \includegraphics[trim={0cm, 9.15cm, 8.4cm, 0cm},clip,width=\textwidth]{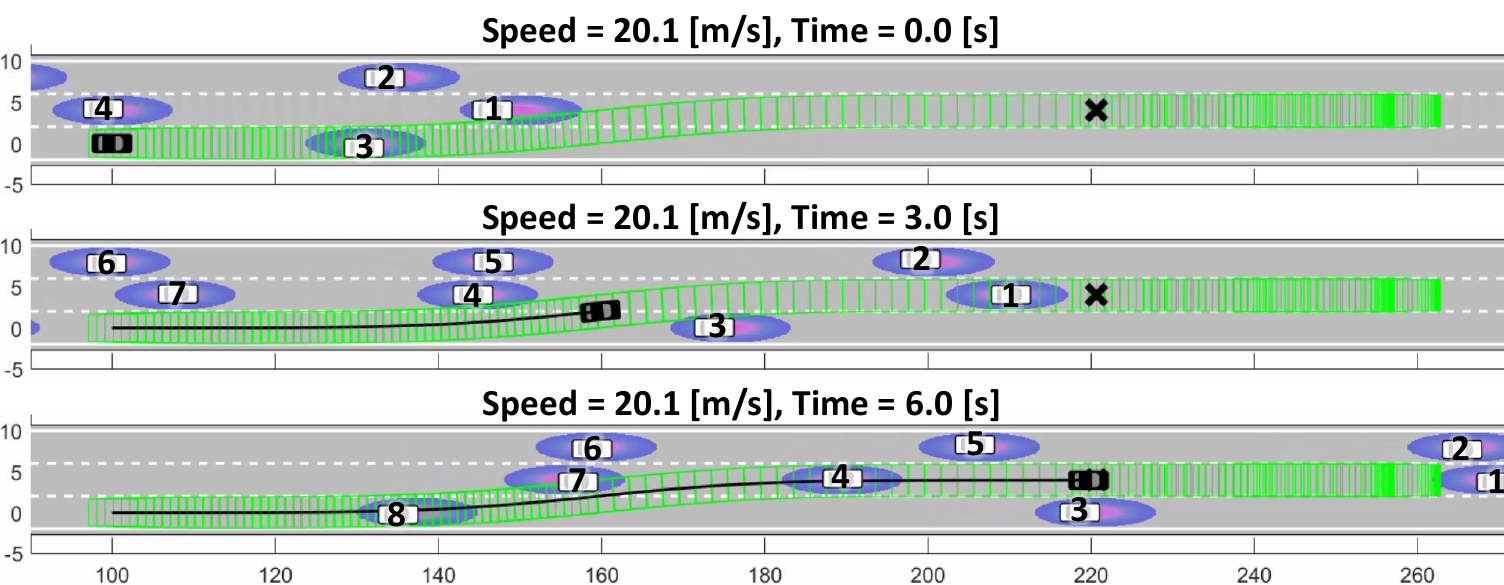}
         \caption{\riskmethodname{} utilized.}
         \label{fig: simulation result 1-riskrtd}
     \end{subfigure}
    \begin{subfigure}[b]{0.49\textwidth}
         \includegraphics[trim={0cm, 9.15cm, 8.4cm, 0cm},clip,width=\textwidth]{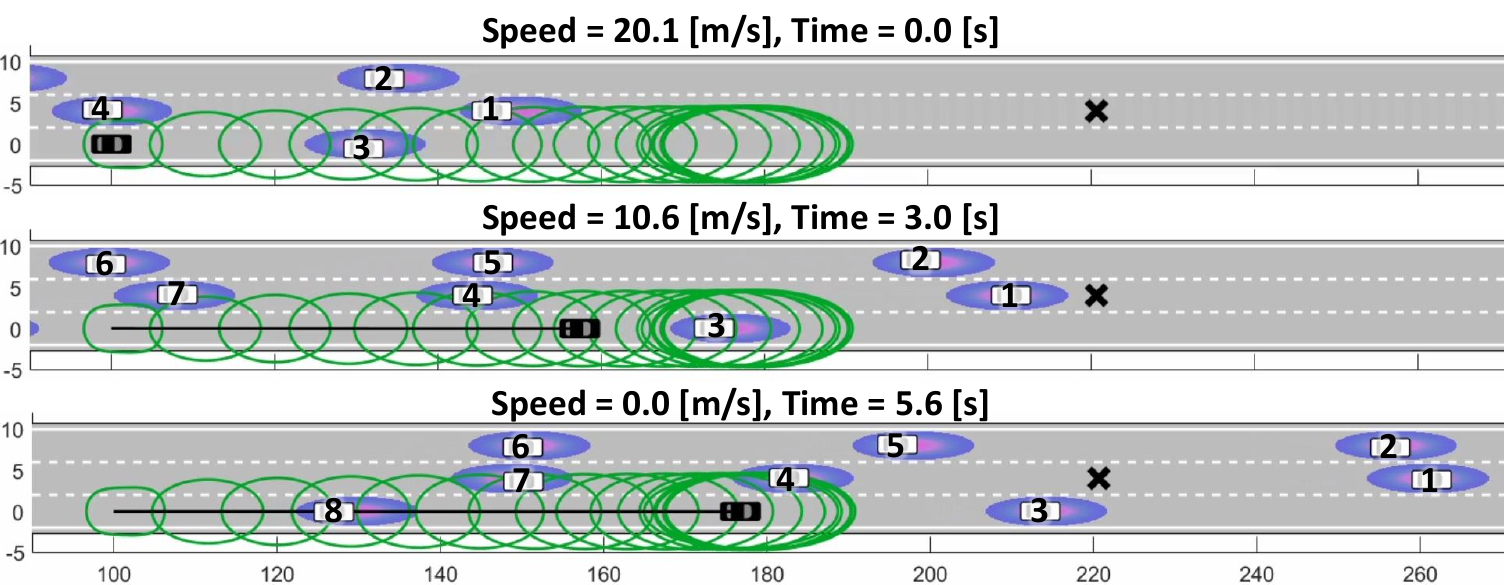}
         \caption{CCPBA utilized.}
         \label{fig: simulation result 1-CCPBA}
     \end{subfigure}
    \begin{subfigure}[b]{0.49\textwidth}
         \includegraphics[trim={0cm, 9.15cm, 8.4cm, 0cm},clip,width=\textwidth]{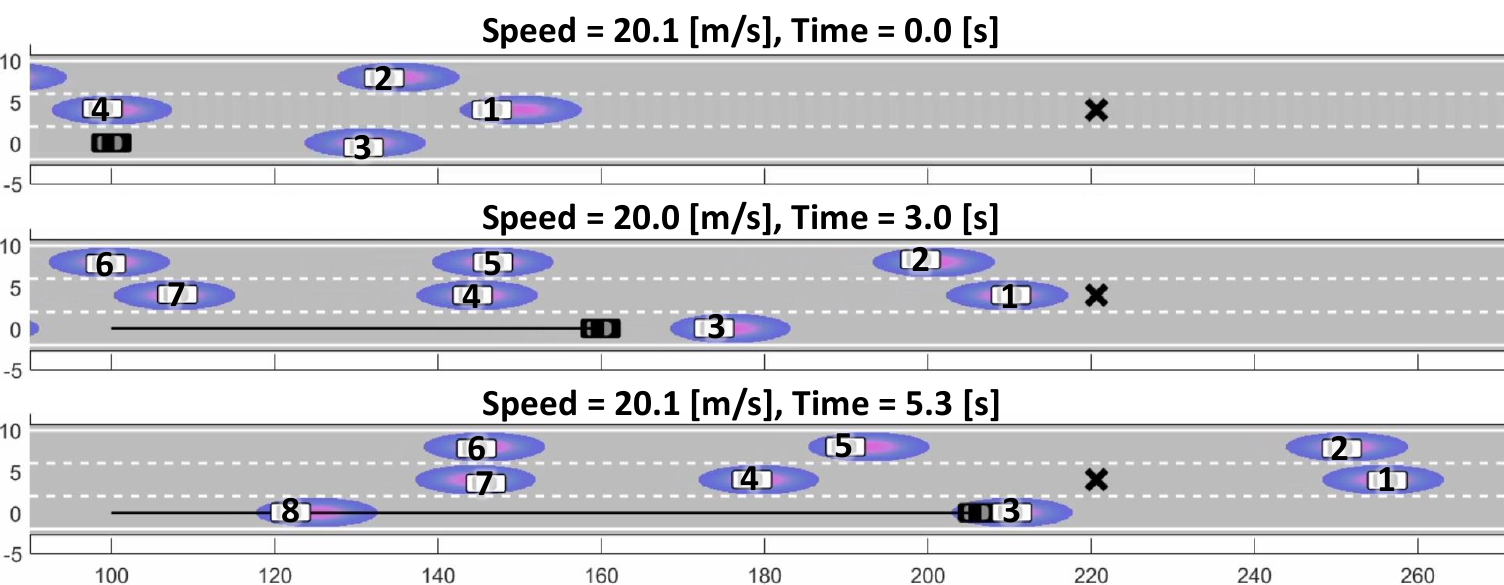}
         \caption{Cantelli MPC utilized.}
         \label{fig: simulation result 1-cantelli}
     \end{subfigure}
     \begin{subfigure}[b]{0.49\textwidth}
         \includegraphics[trim={0cm, 9.15cm, 8.4cm, 0cm},clip,width=\textwidth]{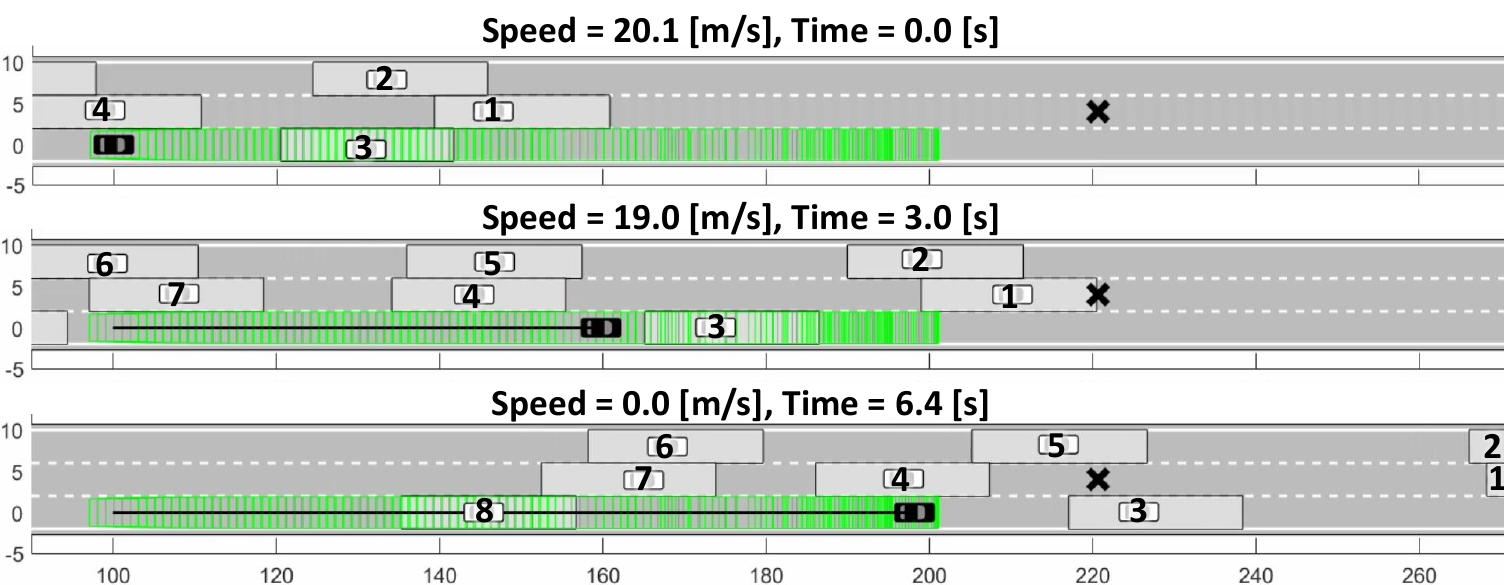}
         \caption{REFINE utilized.}
         \label{fig: simulation result 1-refine}
     \end{subfigure}
    \caption{Example of a single planning iteration of a simulated scenario in which \riskmethodname{} is able to navigate the ego vehicle (black) to the provided waypoint (black cross) through a lane change maneuver solved by one planning iteration, while CCPBA and REFINE execute contingency braking maneuvers as they are unable to find feasible solutions.
    Cantelli MPC results in a crash, as it has no fail-safe maneuver, so it instead maintains its velocity and continues searching for a feasible solution to execute the lane change.
    Forward reachable sets are shown in green.
    Obstacles are shown in white and are marked by their indices to make them trackable among different time instances.
    Probability distributions are illustrated as purple and blue ellipses where the probability density from low to high is illustrated from blue to purple.
    Note that for visualization purposes we have buffered the probability distributions shown in this figure by the footprints of the obstacles.
    Lastly, the convex hull that REFINE uses to over-approximate the 5-$\sig$ regions of the obstacle probability distributions are shown as gray boxes around the obstacles for all $(i,j) \in \I\times\J$. 
    Note that the distances on the x-axis and y-axis are in [m].}
    \label{fig: single planning iteration}
\end{figure*}

% \begin{table}[!tb]
%     \centering
%     \begin{tabular}{|c||c|c|c|c|}
%     \hline 
%     \multirow{2}{*}{\textbf{Method}} & \textbf{Success} & \textbf{Crash} & \textbf{Other} & \textbf{Mean} \\ 
%     & \textbf{[\%]} & \textbf{[\%]} & \textbf{Action [\%]}  & \textbf{Solve Time}\\\hline
%     \riskmethodname{} & \textbf{81.8} & \textbf{0.00} & \textbf{18.2} & 0.412 $\pm$ 0.271 \\\hline
%     CCPBA & 55.2 & \textbf{0.00} & 44.8 &  \textbf{0.291 $\pm$ 0.124}  \\ \hline
%     Cantelli MPC & 9.10 & 35.4 & 55.5 & 0.643 $\pm$ 0.141 \\\hline
%     REFINE & TBA & TBA & TBA & 0.341 $\pm$ 0.113 \\\hline
    
%     % \methodname & 27\% & 0\% & 73\% & 16.5927[m/s] &  (0.2537[s], 1.3042[s])  \\ \hline
%     \end{tabular}
%     \caption{Single planning iteration results using \riskmethodname{}, CCPBA and Cantelli MPC. ``Other Action" encompasses the trials where each method did not complete the lane change maneuver, but instead either executed a safe stop maneuver or decided to keep driving in lane. }
%     % \challen{I did sig figs since with matching decimal places the difference in number alignment was visually unappealing.}}
    
%     \label{table: singleplan highway result}
% \end{table}

\subsubsection{3-Lane Highway}
\label{subsec: multiple_plan_sim}

% This experiment evaluates how executing multiple planning iterations in succession, as is required in real-life highway settings, affects the success and crash rates of \riskmethodname{} in comparison to CCPBA, Cantelli MPC and REFINE.
% This experiment evaluates the performance of \riskmethodname{} in a 3-lane highway environment where it must execute multiple planning iterations in succession, as is required in real-life highway settings, and compares its performance to CCPBA, Cantelli MPC and REFINE.
% \Ram{you are missing a description of the specific trajectories we choose. I'd recommend describing them in the supplementary section or referencing the REFINE paper clearly.} \jinsun{Addressed}
This experiment evaluates the performance of \riskmethodname{} in a 3-lane highway environment where it must execute multiple planning iterations in succession and compares its performance to CCPBA, Cantelli MPC and REFINE.
We compare the methods over 1000 randomly generated 3-lane highway scenarios in simulation with $\epsilon=0.05$.
In each simulation scenario, the ego vehicle is expected to navigate through dynamic traffic for 1000[m] from a given initial position.  
Each scenario contains 3 static obstacles and a number of moving vehicles as dynamic obstacles, where the number of moving vehicles in each scenario is randomly selected between 5 and 25. 
% The initialization of each dynamic obstacle and its corresponding probability density function $\pdf$ with each $j\in\J$ are performed in the same way as explained in Section \ref{subsec: single_plan_sim}.
To ensure that we can compare to CCPBA, we choose the $\pdf$ describing each obstacle's location during $\T_j$ as a Gaussian with a standard deviation of $\sig$ for any $(i,j)\in\I\times\J$.
% We define $\sig$ as the standard deviation of $\pdf$.
To compare to REFINE, which is a deterministic motion planning algorithm that assumes perfect knowledge of the locations of obstacles, we require that REFINE avoids a box that over-approximates the 5-$\sig$ region of a Gaussian distribution.
Details on the reasoning behind the choice of 5-$\sig$ are explained in the last paragraph of Appendix \ref{subsec: sim_appendix}.

Each scenario is simulated for 10 trials resulting in a total of 10,000 simulation cases. 
In each trial, the starting locations of the ego vehicle and obstacles are the same, but the trajectories the obstacles follow are varied to capture the uncertain nature of each scenario.
Additional details on how these scenarios and obstacle trajectories are generated can be found in Appendix \ref{subsec: sim_appendix}.

Figure \ref{fig: single planning iteration} depicts an example of a single planning iteration of a scenario where \riskmethodname{} is able to successfully find a solution to execute a lane change to get to the given waypoint, while CCPBA, Cantelli MPC and REFINE are unable to execute the lane change.
% \Ram{explain what you mean by a trial here.}
% In each scenario, the ego vehicle is initialized in the center of a random lane and the initial speed of the ego vehicle is randomly selected from between 15[m/s] to 25[m/s].
% Each moving vehicle is set to maintain its speed and start within 120[m] ahead of the ego vehicle's initial position, so that the ego vehicle could encounter dense traffic immediately from the beginning.
Table \ref{table: multiplan highway result} presents \riskmethodname{}, CCPBA, Cantelli MPC, and REFINE's results for all 10,000 cases of the 3-lane highway multiple planning horizon experiment.
As seen in Table \ref{table: multiplan highway result}, \riskmethodname{} is able to achieve a higher success rate than CCPBA and Cantelli MPC. 
This is in part due to the fact that \riskmethodname{} is able to more closely approximate the ground truth risk of collision as shown in Table \ref{table: integration on exponential family}.
% This allows \riskmethodname{} to generate plans that navigate the ego vehicle through more difficult scenarios, while conservative over-approximation of the risk of collision is likely to yield plans that are infeasible.
REFINE also has a lower success rate than \riskmethodname{} due to its conservative treatment of the uncertain region for the obstacle location.
Cantelli MPC has more crashes than CCPBA, and \riskmethodname{} because (1) Cantelli MPC, unlike CCPBA and \riskmethodname{} can only enforce risk-aware safety at discrete instances so crashes could occur in between these instances, (2) Cantelli MPC does not have a fail-safe stopping maneuver in case it cannot find a solution. 
Instead, it maintains speed and searches for a new solution and sometimes crashes into the vehicles in front of it.

Notice that despite the fact that $\epsilon=0.05$, \riskmethodname{} has a crash rate that is smaller than 5\%.
This occurs because even though \riskmethodname{} has a significantly tighter over-approximation than the comparison methods, it still over-approximates the true risk of collision.
This difference in the over-approximation and the true risk of collision can be attributed to: (1) the multiple relaxations made to compute the closed-form over-approximation highlighted in Section \ref{subsec:chance constraint}, and (2) the over-approximation of the actual ego vehicle location within the $j$-th time interval with the collection of maps introduced in Assumption \ref{ass: offline reachability}.
Future work will explore the contribution of each of these relaxations to the extent of over-approximation.
{\bf Note that despite this over-approximation, \riskmethodname{} is still significantly less conservative than deterministic algorithms like REFINE as seen in Table \ref{table: multiplan highway result}}.
\begin{table}[!tb]
    \centering
    \begin{tabular}{|c||c|c|c|c|}
    \hline 

    \multirow{2}{*}{\textbf{Method}} & \textbf{Success} & \textbf{Crash} & \textbf{Safely} & \textbf{Solve Time [s]} \\
    & \textbf{[\%]} & \textbf{[\%]} & \textbf{Stop [\%]}  & \textbf{(Mean, Max.)} \\\hline
    \riskmethodname{} & \textbf{80.3} & 0.8 & 18.9 & (0.412, 0.683) \\\hline
    CCPBA & 44.5 & 0.4 & 55.1 & \textbf{(0.291, 0.417)}  \\ \hline
    Cantelli MPC & 25.6 & 74.4 & 0.0 & (0.743, 3.921) \\\hline
    REFINE & 61.0 & \textbf{0.0} & 39.0 & (0.341, 0.562)  \\\hline
    
    % \methodname & 27\% & 0\% & 73\% & 16.5927[m/s] &  (0.2537[s], 1.3042[s])  \\ \hline
    \end{tabular}
    \caption{Simulation results comparing \riskmethodname{} to CCPBA, Cantelli MPC and REFINE on a 1000[m] stretch of dense highway. 
    A successful trial means the ego vehicle was able to successfully travel 1000[m] without crashing or coming to a stop.}
    
    \label{table: multiplan highway result}
\end{table}

% Table \ref{table: multiplan highway result} provides the statistic of simulation results using the three methods.
% Unsurprisingly the ego vehicle with \riskmethodname{} being applied crashes in some simulation scenario, because \riskmethodname{} only ensures the vehicle safety with confidence level $1-\epsilon$.
% However, with the guarantee of vehicle safety being sacrificed, the ego vehicle is able to travel faster and more aggressively under the \riskmethodname{} planning framework, thus still achieves the highest successful rate among the three testing methods. 

\subsubsection{Left Turning}

% \Ram{you are missing a description of the specific trajectories we choose. I'd recommend describing them in the supplementary section or referencing the REFINE paper clearly.}\jinsun{mentioned at the beginning of Section \ref{subsec:sim}}
 The aim of this experiment is to evaluate how different allowable risk thresholds ($\epsilon$) affect the behavior and performance of \riskmethodname{} over a single planning iteration, and unprotected left turns are ideal scenarios for this task. 
 % Unprotected left turns are ideal scenarios to evaluate this over a single planning iteration.
 In these scenarios, being less conservative has a large effect on the ability of the ego vehicle to complete the left turn faster by navigating through tight windows in the traffic of oncoming vehicles.
 In this experiment, the risk threshold is set to 0.01, 0.05, 0.10, 0.20 and 0.50, and the ego vehicle is tasked with navigating through 100 randomly generated unprotected left turning scenarios. 
 % \challen{Need to decide if we want to do 1000 or 2000 scenarios}
 Each scenario is simulated for 10 trials, where in each trial the obstacle trajectories are varied in the same way as mentioned in Section \ref{subsec: multiple_plan_sim}.
 In each scenario, the ego vehicle is tasked with executing an unprotected left turn across two oncoming lanes at a 4-way intersection as depicted in Figure \ref{fig: left_turn}.
 Each scenario contains between 4 to 6 static obstacles occupying the horizontal lanes of the intersection and up to 4 dynamic obstacles that drive through the intersection in the vertical lanes, where the number of static and dynamic obstacles are randomly selected for each scenario.
 The starting lanes and initial positions of the dynamic obstacles are also randomly selected for each scenario.
 The initial speeds of the dynamic obstacles are randomly sampled from between 12[m/s] and 17[m/s].
 Similar to Section \ref{subsec: multiple_plan_sim}, $\pdf$ is chosen to describe an obstacle's location during $\T_j$ as a Gaussian for all $(i.j)\in\I\times\J$.
 
 Table \ref{table: left-turning} summarizes the results of the left turning experiment.
 As the allowable risk threshold $\epsilon$ increases, we see a reduction in the Average Time To Goal (ATTG) and Maximum Time To Goal (MTTG) as the ego vehicle is less conservative with higher $\epsilon$ and thus executes the unprotected left turns more quickly.
Additionally, as $\epsilon$ increases, the rate of collision also increases.
% Additionally, as $\epsilon$ increases the reduction in conservatism also results in a larger incidence of collisions with obstacles.
 % At $\epsilon$ = 1\%, \riskmethodname{} is able to complete 99.6\% of the turns without a crash, however, it has a relatively large ATTG of 12.6s since it waits for nearly all the obstacles to go through the intersection before executing the left turn.
 % At $\epsilon$ = 5\%, \riskmethodname{} is able to navigate through 99.0\% of the turns without a crash, with a ATTG of 9.2s. It crashed 1.0\% of the time.
 % At $\epsilon$ = 10\%, \riskmethodname{} is only able to complete 97.9 without crashing. 
 % However, due to its reduced conservatism it has a the lowest ATTG and is able to take more risky turns and navigate through the turn when it only has a small window to do so.
 % This reduced conservatism results in crashes 2.1\% of the time.

 \begin{figure}[!tb]
    \centering
    \includegraphics[trim={0cm, 7.cm, 10.cm, 0.2cm},clip,width=1\columnwidth]{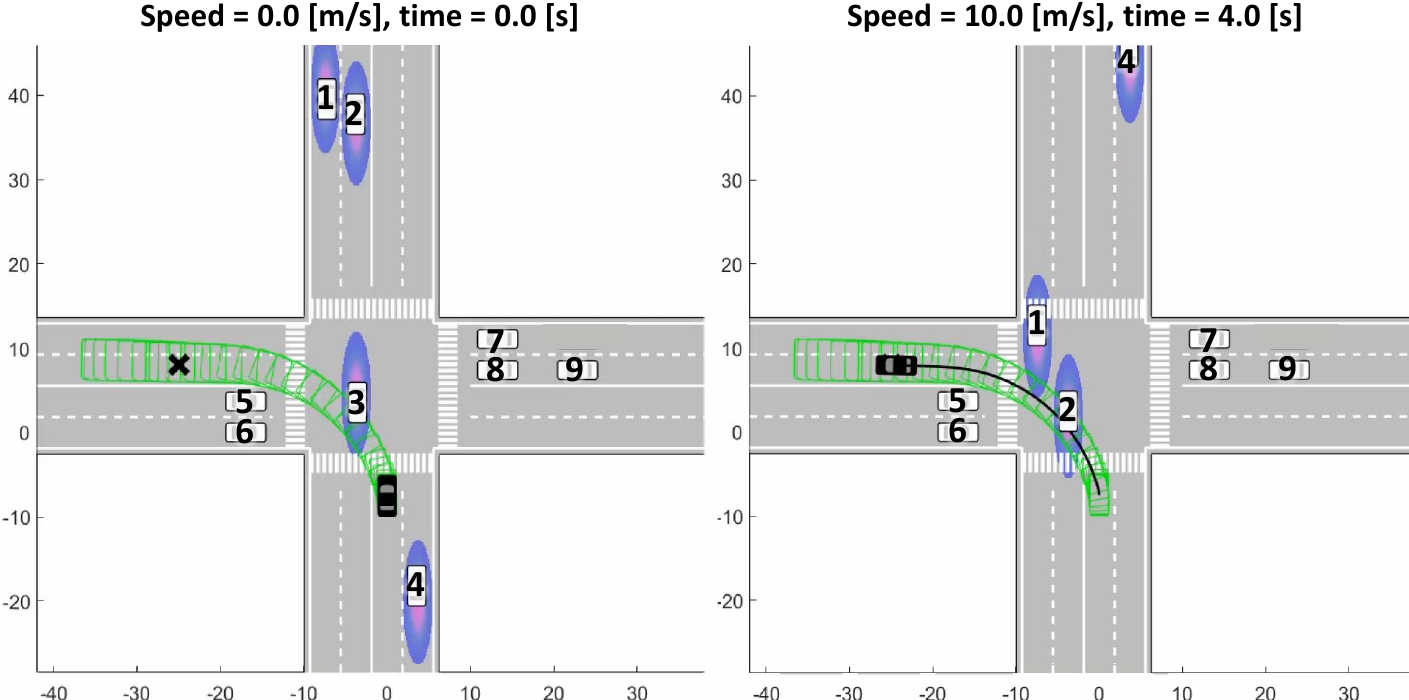}
    \caption{Example of a simulated unguarded left turning trial where the ego vehicle (black) navigates through a gap in the oncoming traffic to successfully execute the unguarded left turn.
    Forward reachable sets are shown in green.
    Obstacles are shown in white and are marked by their indices to make them trackable among different time instances.
    Probability distributions are illustrated as purple and blue ellipses where the probability density from low to high is illustrated from blue to purple.
    }
    \label{fig: left_turn}
\end{figure}
 
\begin{table}[!tb]
    \centering
    \begin{tabular}{|c||c|c|c|c|}
    \hline 

    \textbf{$\epsilon$} & \textbf{Success [\%]} & \textbf{Crash[\%]} & \textbf{ATTG [s]} & \textbf{MTTG [s]} \\ \hline
    0.01 & \textbf{100.0} & \textbf{0.0} & 8.868 & 13.500 \\\hline
    0.05 & 99.9 & 0.1 & 8.752 & 13.500\\ \hline
    0.10 & 99.5 & 0.5 & 8.187 & 13.400  \\\hline
    0.20 & 98.7 & 1.3 & 6.085 & 10.200 \\\hline
    0.50 & 95.6 & 4.4 & \textbf{5.891} & \textbf{10.000} \\\hline
    
    % \methodname & 27\% & 0\% & 73\% & 16.5927[m/s] &  (0.2537[s], 1.3042[s])  \\ \hline
    \end{tabular}
    \caption{Simulation results over a single planning iteration for \riskmethodname{} executing a left turn at different risk thresholds.
    Average Time To Goal (ATTG) and Maximum Time To Goal (MTTG) are only taken over successful trials.}
    
    \label{table: left-turning}
\end{table}

\subsection{Hardware}
To illustrate the capabilities of \riskmethodname{}, we also tested it on a $\frac{1}{10}$th-scale All-Wheel-Drive car-like robot, Rover, based on a Traxxas RC platform.
The first experiment shows \riskmethodname{} executing a lane change to overtake an obstacle Rover amidst uncertainty in the obstacle's location with varying risk thresholds. 
At lower risk thresholds \riskmethodname{} is more conservative and is not able to execute the lane change, whereas at higher thresholds \riskmethodname{} allows the ego Rover to aggressively overtake the obstacle Rover.
The second experiment illustrates a scenario where the obstacle Rover is driven laterally into the path of the ego Rover and it must generate a trajectory with a less than $\epsilon$ risk of collision into the obstacle Rover. 
At lower risk thresholds, \riskmethodname{} chooses to be conservative and triggers its contingency braking maneuver before it intersects with the obstacle Rover, unable to reach the set waypoint.
However, at higher risk thresholds \riskmethodname{} is more aggressive and executes an aggressive lane change in order to get to the waypoint.

\section{Extensions To Other Systems}
\label{sec: system_extensions}

The proposed algorithm, \riskmethodname{}, can generalize to robotic systems other than the vehicle model highlighted in this work, including manipulators and quadrotors. 
Specifically, \riskmethodname{} can be applied to any system that satisfies the following assumptions:
1) The system dynamics must have bounded modeling error (Assumption \ref{ass: dyn error bnd}); 
2) The system must be able to generate a motion plan in $\tplan$ seconds (Assumption \ref{assum:tplan});
3) The given system must have a large enough sensor radius as described in Assumption \ref{ass: sense horizon};
4) Obstacles must be represented as twice-differentiable probability density functions (Assumption \ref{ass: obs pdf});
5) There must exist a set of over-approximative zonotope reachable sets of a parameterized dynamical model for the system (Assumption \ref{ass: offline reachability}). 
Note that this last assumption is a critical, robot-specific assumption for RADIUS. 
However, zonotope reachable sets satisfying Assumption \ref{ass: offline reachability} have been constructed for a number of robotic systems including autonomous vehicles \cite{REFINE}, manipulators \cite{ARMOUR} and quadrotors \cite{quadrotor_kousik}.
\section{Conclusion}
\label{sec:conclusion}

This work proposes \riskmethodname{} as a real-time, risk-aware trajectory planner for autonomous vehicles operating in uncertain and dynamic environments.
The method proposes a novel method for computing a tight, differentiable, closed-form over-approximation of the risk of collision with an obstacle.
Given some allowable risk threshold ($\epsilon$), \riskmethodname{} uses this over-approximation conjunction with reachability-based methods to generate motion plans that have a no greater than $\epsilon$ risk of collision with any obstacles for the duration of the planned trajectory.
Furthermore, \riskmethodname{} is able to outperform the existing state-of-the-art chance-constrained and deterministic approaches in a variety of driving scenarios while achieving real-time performance.

% Of course, ARMTD has limitations: it may not perform in real time without parallelization, is only demonstrated on 6-DOF planning problems, and has not yet been demonstrated planning around humans.
% However, because ARMTD uses time-varying reachable sets, it can readily extend to dynamic environments, uncertainty such as tracking error, and planning with grasped objects.
% The results in this work show promise for practical, safe robotic arm trajectory planning.

% \pat{I think we need quite a bit more here... I really don't think the utility of ARMTD comes through unless we talk about dynamic obstacles/obstacles w bounded velocity and how those fit naturally into this framework, as well as potentially how to deal with the tracking error/manipulator dynamics, and how to add the grasped object volume onto the FRSs... yeah I just feel like we need to sell it a bit more.}

%To generate original 14page version of the document before cuts
% \input{sections_long/01_introduction.tex}
% \input{sections_long/02_03_Notation_and_Preliminaries.tex}
% \input{sections_long/04_online_planning.tex}
% \input{sections_long/05_reachability.tex}
% \input{sections_long/06_implementable.tex}
% \input{sections_long/07_experiments.tex}
% \input{sections_long/08_conclusion.tex}

% \pagebreak

% \begin{appendices}
% \input{sections/appendix_supplement}
% \end{appendices}

% \pagebreak

% \bibliographystyle{plainnat.bst}
\renewcommand{\bibfont}{\normalfont\small}
{\renewcommand{\markboth}[2]{}% Remove header adjustment
\printbibliography}

\pagebreak

 \appendices
% \section{Appendix}
% \input{sections/appendix_footprint.tex}
% \input{sections/appendix_thresholdproof.tex}
% \input{sections/appendix_control.tex}
\section{Proof of Lemma \ref{lem:integrandrelaxation}}
\label{app: proof of integrand relaxation}
\begin{proof}
To relax the integrand $\pdf$, we start by computing the 2nd-order Taylor Expansion of $\pdf$ centered at $\wc_\T+A_{j}p$ and applying Mean Value Theorem (MVT) \cite[Theorem 4.1]{sahoo1998mean} to eliminate higher order terms in the Taylor Expansion as:
\begin{equation}
\label{eq: Taylor + MVT}
\begin{split}
    \hspace{-0.2cm}\pdf(w) = &\pdf(\wc_\SS+A_{j}p)+\frac{\partial \pdf}{\partial w}(\wc_\SS+A_{j}p)\cdot\\
        &\cdot(w-\wc_\SS-A_{j}p)+\frac{1}{2}(w-\wc_\SS-A_{j}p)^\top\cdot\\
        &\cdot\Hess_{\pdf}(w')\cdot(w-\wc_\SS-A_{j}p)
\end{split}
\end{equation}
where $w'\in\SS+A_{j}p$ is some point on the line segment joining points $\wc_\SS+A_jp$ and $w$ in $\SS+A_jp$, and $\Hess_{\pdf}$ gives the Hessian of $\pdf$.
Because MVT does not provide $w'$ in a closed-form, we then drop the dependency of $w'$ in \eqref{eq: Taylor + MVT} by bounding the Hessian of $\pdf$ as a matrix $H_\SS\in\R^{2\times 2}$ where $H_\SS$ is generated by taking element-wise supremum of $\Hess_{\pdf}$ over $\SS\oplus A_{j}\PP$ using Interval Arithmetic \cite{hickey2001interval}.
Note by construction of $\SS$ and $\wc_\SS$, it is guaranteed that either $w\geq\wc_\SS+A_{j}p$ for all $w\in\SS+A_{j}p$ or $w\leq\wc_\SS+A_{j}p$ for all $w\in\SS+A_{j}p$, thus by definition of $H_\SS$ the desired inequality holds for all $w\in\SS+A_{j}p$.
\end{proof}
\section{Additional Details}
\label{subsec: experiment_appendix}
Specifications of the full-size FWD vehicle and the Rover robot used in this work can be found in \cite[Section IX] {REFINE}, from which we also adapt control gains and zonotope reachable sets.

\subsection{Desired Trajectories}
\label{app: trajectory parametrization}
In this work we select 4 families of desired trajectories for driving maneuvers often observed during daily driving: speed changes, direction changes, lane changes and left turning.
Each desired trajectory is the concatenation of a driving maneuver and a contingency braking maneuver. 
% The driving maneuver is either a speed change, direction change, lane change or left turning, i.e., each option corresponds to one of the 4 families of desired trajectories.
Note that the duration $\tm$ of driving maneuvers remains constant within each trajectory family, but can vary among different trajectory families.
Each desired trajectory is parameterized by $p=[p_u,p_y]^\top\in\PP\subset\R^2$ where $p_u$ and $p_y$ decide desired longitudinal speed and lateral displacement respectively. 
The trajectory families associated with speed change, direction change and lane change are detailed in \cite[Section IX-A]{REFINE}, thus we describe desired trajectories that achieve left turning here.

% Assuming that the ego vehicle has initial longitudinal speed $u_0\in[0,\uc]$ and initial heading $h_0\in[-\pi,\pi]$ at time 0, 
To achieve a left turning maneuver, we set the desired trajectory for longitudinal speed as
\begin{equation}
    \udes(t,p) = \begin{dcases}
        \frac{11}{2}t,      \hspace{1.15cm} \text{ if } 0\leq t<\frac{1}{4}\tm \text{ and } \frac{11}{8}\tm< p_u\\
        p_u,        \hspace{1.35cm} \text{ if } 0\leq t<\frac{1}{4}\tm \text{ and } \frac{11}{8}\tm\geq p_u\\
        p_u,       \hspace{1.35cm} \text{ if } \frac{1}{4}\tm\leq t<\tm\\
        \ubrk(t,p),\hspace{0.2cm} \text { if } t\geq\tm
    \end{dcases}
\end{equation}
where $\ubrk:[\tm,\tf]\times\PP\rightarrow\R$ describes the desired trajectory of longitudinal speed during contingency braking and shares the same formulation with the other 3 trajectory families. 
The desired trajectory of the yaw rate is as follows: 
\begin{equation}
    \rdes(t,p) = \begin{dcases}
        \frac{1}{2}p_y\left(1-\cos\left(\frac{4\pi}{\tm} t\right)\right),      \hspace{0.1cm} \text{ if } 0\leq t<\frac{1}{4}\tm\\
        p_y,       \hspace{3.25cm} \text{ if } \frac{1}{4}\tm\leq t<\frac{3}{4}\tm\\
        \frac{1}{2}p_y\left(1-\cos\left(\frac{4\pi}{\tm} t\right)\right),      \hspace{0.1cm} \text{ if } \frac{3}{4}\tm\leq t<\tm\\
        % \frac{1}{2}p_y\left(1-\cos\Big(\frac{4\pi}{\tm} \big(t-\frac{\tm}{2}\big)\Big)\right),       \hspace{0.1cm} \text{ if } \frac{3\tm}{4}\leq t<\tm\\
        0,\hspace{3.42cm} \text { if } t\geq\tm
    \end{dcases}
\end{equation}
And desired trajectory of heading is set as $\hdes(t,p) = h_0 + \int_0^t\rdes(\tau,p)~d\tau$ for any $t\in[0,\tf]$ and $p\in\PP$, where $h_0\in[-\pi,\pi]$ is the initial heading of the ego vehicle at time 0.

The duration $\tm$ of driving maneuvers for every trajectory family is 3[s] for speed change, 3[s] for direction change, 6[s] for lane change, and 4[s] for left change.
Because we do not know which desired trajectory ensures not-at-fault \textit{a priori}, to guarantee real-time performance, $\tplan$ should be no greater than the smallest duration of a driving maneuver.
Therefore in this work we set $\tplan = 3$[s].

\subsection{Computational Efficiency of Risk Approximation}
\label{canonical appendix}
In this section we add to the evaluation presented in Section \ref{subsec: canonical test} of the main text.
We compare the computation times of the proposed method's approximation of the risk of collision against the Cantelli inequality, and the Chance-Constrained Parallel Bernstein Algorithm (CCPBA) on the same 3000 randomly generated test cases.
In each test case, $(i,j,z_0,p)$ is randomly chosen from $\I\times\J\times\Z_0\times\PP$. 
% and $\xi_j$ is randomly chosen from the zonotope reachable sets for the ego vehicle.
Table \ref{table: computation times} summarizes the computation times for generating the risk of collision by approximating the following the integral:$\int_{\xi_j(z_0,p)\oplus\zonocg{0}{\Gobs}} \pdf(w)~dw$ when $\pdf$ is modeled as a Gaussian.
Table \ref{table: computation times} shows that \riskmethodname{} is able to compute the approximation of the risk of collision about 60\% faster than CCPBA, but not as fast as evaluating the Cantelli inequality.

\begin{table}[h]
    \centering
    \begin{tabular}{|c||c|c|}
    \hline 
    \multirow{2}{*}{\textbf{Method}} & \textbf{Mean} & \textbf{Maximum} \\
     & \textbf{Solve Time [ms]} & \textbf{Solve Time [ms]}\\\hline
     \riskmethodname{} & 0.4372 & \textbf{2.3150} \\\hline
      Cantelli & \textbf{0.0181} & 2.5223\\\hline
    CCPBA & 1.1012 & 3.5820\\\hline
    \end{tabular}
    \caption{Results comparing the computation times for generating an approximation for the risk of collision using \riskmethodname{}, CCPBA, and the Cantelli Inequality when the obstacle uncertainty is represented as a Gaussian distribution.}
    \label{table: computation times}
\end{table}

\subsection{Additional Simulation details}
\label{subsec: sim_appendix}

We discuss more in detail the experimental setup for the simulation experiments discussed in Section \ref{subsec:sim}.
In both the 3-lane highway and left turning scenarios, the uncertainty in the locations and predictions of obstacle motions for each obstacle is represented as a Gaussian distribution $\pdf$ for the stochastic methods for any $(i,j)\in\I\times\J$.
More exactly, $\pdf$ is a Gaussian distribution that represents the possible location of the $i$-th obstacle during the $j$-th time interval $\T_j$, and is constructed such that its mean, $\pdfmu$, is at the center of a lane for all $\T_j$.
As time increases we translate this uncertain region forward at a constant speed while keeping it centered in the lane.
In other words, $\pdfmu - \mu_{i,j-1}$ is constant for all $j>1$ in $J$.
Recall, from Definition \ref{def: footprint}, that $L$ and $W$ are the length and width of each obstacle, respectively.
Then for any $(i,j)\in\I\times\J$, the standard deviation $\sigma_{i,j}$ of $\pdf$ is chosen such that the $3\sigma_{i,j}$-region of the Gaussian distribution covers the area of width $3.7-W$ and length $L+u_i(0)\cdot \Delta_t$, where $u_{i}(0)$ is the speed of the $i$-th obstacle at time 0 and $\Delta_t$ is the time interval defined in Section \ref{subsec: environment}.
Note the choice of width $3.7-W$ ensures that the footprint of the obstacle always stays inside the lane. 

To capture the stochastic nature of each scenario, we simulate each scenario for 10 trials.
For each given scenario and for any $i\in\I$, the $i$-th obstacle is always initialized with the same state, but follows different trajectories among different trials of this scenario. 
These trajectories are selected such that the locations of the $i$-th obstacle during the $j$-th time interval $\T_j$ are randomly sampled from $\pdf$ for each trial, where $\pdf$ is kept constant for all trials of the same scenario for any $(i,j)\in\I\times\J$.
This variability in the trajectories allows us to capture trials where a specific obstacle trajectory may cause a crash, whereas in other trials of the same scenario, different trajectories do not cause crashes. 

Because REFINE is a deterministic motion planning algorithm, it assumes perfect knowledge of the locations of obstacles.
In scenarios where there is uncertainty in the location of obstacles, deterministic algorithms like REFINE often generate motion plans to avoid the entire uncertain region.
However, because Gaussian distributions have non-trivial probability density over the entire space $\W$, applying REFINE naively would require avoiding the entire world space.
As such, we require REFINE to avoid a box that over approximates the 5-$\sigma_{i,j}$ region of the Gaussian distribution for any $(i,j)\in\I\times\J$ as shown in Figure \ref{fig: simulation result 1-refine}.
Note such 5-$\sigma_{i,j}$ region accounts for 99.99999\% of the probability mass.
% This 5-$\sigma$ region accounts for 99.99999\% of the probability mass

\section{Additional Experiments}
\label{sec:additional_experiments}

In addition to the experiments reported in the main body of this work, we perform two additional experiments to evaluate the proposed method. 

\subsection{Single Planning Horizon 3-Lane Highway Environment}
\label{subsec: single_plan_sim}

This experiment compares the performance of \riskmethodname{} to CCPBA, Cantelli MPC, and REFINE in a 3-lane highway driving scenario in simulation over 10 randomly generated scenarios. 
The aim of this experiment is to see how \riskmethodname{} performs in comparison to the other methods when trying to execute a single lane change in a small stretch of dense highway.
Each scenario contains 1 static obstacle and between 4 to 9 dynamic obstacles, where the number of dynamic obstacles was randomly selected for each scenario.  
It is important to note that in these scenarios, since it was only over a single planning iteration, these obstacles were all spawned within a 100[m] radius of the ego vehicle.
This resulted in scenarios that are denser than the situations typically encountered in the 3-lane experiment described in Section \ref{subsec: multiple_plan_sim}.
The initial speeds of all dynamic obstacles are also randomly sampled from between 15[m/s] to 20[m/s] in each scenario.
In each of these scenarios, the ego vehicle is randomly initialized in a lane, and is commanded to get as close as possible to a given waypoint in a different lane.
The ego vehicle is expected to achieve the task in a single planning iteration with allowable risk threshold $\epsilon = 0.05$ and $\tplan = 3$[sec].
Success in this experiment is characterized by being able to find a feasible lane change maneuver during the planning time and execute it without crashing.
Each scenario is simulated for 200 trials, resulting in a total of 2000 simulation cases.
The obstacle uncertainties for the stochastic methods and REFINE are represented in the same way as in Section \ref{subsec: multiple_plan_sim}.
Table \ref{table: singleplan highway result} summarizes \riskmethodname{}, CCPBA, Cantelli MPC, and REFINE's performance on the 3-lane single planning iteration scenarios.
In this experiment we see that \riskmethodname{} was able to successfully execute this lane change maneuver more often than the comparisons.

\begin{table}[!tb]
    \centering
    \begin{tabular}{|c||c|c|c|c|}
    \hline 
    \multirow{2}{*}{\textbf{Method}} & \textbf{Success} & \textbf{Crash} & \textbf{Other} \\ 
    & \textbf{[\%]} & \textbf{[\%]} & \textbf{Action [\%]}  \\\hline
    \riskmethodname{} & \textbf{81.8} & \textbf{0.0} & \textbf{18.2}  \\\hline
    CCPBA & 55.2 & \textbf{0.0} & 44.8  \\ \hline
    Cantelli MPC & 9.1 & 35.4 & 55.5 \\\hline
    REFINE & 21.2 & 0.0 & 78.8 \\\hline
    
    % \methodname & 27\% & 0\% & 73\% & 16.5927[m/s] &  (0.2537[s], 1.3042[s])  \\ \hline
    \end{tabular}
    \caption{Single planning iteration results using \riskmethodname{}, CCPBA and Cantelli MPC. ``Success" encompasses the trials where each method was able to successfully execute the lane change. 
    ``Other Action" encompasses the trials where a method did not complete the lane change maneuver, but instead either executed a safe stop maneuver or decided to keep driving in lane.}
    % \challen{I did sig figs since with matching decimal places the difference in number alignment was visually unappealing.}}
    
    \label{table: singleplan highway result}
\end{table}

\subsection{Ablation Study}
\label{subsec: ablation study}
% \jinsun{@Challen, we need an experiment to show that planning with the original chance constraint in \opt{} with monte carlo sucks because it's too slow}

This section performs an ablation study to evaluate the effectiveness of each component of \riskmethodname{}.
In this ablation study, we examine the performance of 3 algorithms, each of which has a certain aspect of \riskmethodname{} removed.
Namely, the 3 algorithms are: \riskmethodname{} (No Analytical Gradient), Monte-Carlo RTD (Discrete) and Monte-Carlo RTD (Continuous).
% Monte-Carlo RTD (Discrete + Relaxation), Monte-Carlo RTD (continuous + relaxation).
% identify the components that make \riskmethodname{} so successful.
For \riskmethodname{} (No Analytical Gradient), we examine the performance of \riskmethodname{} without providing the constraint gradient described in Section \ref{subsec:gradient}. 
Instead, we allow IPOPT to compute a numerical gradient during the optimization.
In the other 2 algorithms, instead of leveraging our over-approximation to the risk of collision from Section \ref{sec: chance constraint}, we instead replace this with a Monte Carlo style sampling method to estimate the risk of collision as defined in \eqref{ineq: pdf relaxation}.
For Monte-Carlo RTD (Discrete), to find a viable control parameter, we sample discrete points from the control parameter space $\P$ in a similar fashion to \cite{manzinger2020using}.
To approximate the risk of collision in a Monte-Carlo fashion, we sample 100 points from $\pdf$ and count the number of points that fall in the zonotope $\xi_j(z_0,p)\oplus\zonocg{0}{\Gobs}$ for any $(i,j)\in\I\times\J$, where $p$ corresponds to a sampled control parameter.
We then sum up these integrals over all $i \in \I$ and $j \in \J$ to estimate the risk of collision with \eqref{ineq: pdf relaxation}. 
% Additionally, we sample trajectories for each obstacle and count the trajectories that intersect with the ego vehicle trajectory parameterized by the sampled control parameter $p \in \P$ to estimate the risk of collision in this method.
% $\E\big(t,z_0,p\big)$ at a time $t$ such that $t$ is set to be the time at midpoint of the interval $\T_j$ to approximate the left hand side of the chance constraint provided in \eqref{eq: chance constraint def}. 
% at the $j$-th time interval that corresponds to  lie inside the zonotope $\xi_j(z_0,p)\oplus\zonocg{0}{\Gobs}$ to approximate the left hand side of the chance constraint provided in \eqref{eq: chance constraint def}. 
% For Monte-Carlo RTD (Discrete + Relaxation) ((MCRTD(D+R)), we sample control parameters in the same way as Monte-Carlo RTD (Discrete), however in this algorithm we estimate the risk of collision by sampling from $\pdf(\cdot\mid i,j)$ and determining in the number of samples in each right triangle $\T\in\bT_{j}(z_0)$. 
% We then sum up all the probability mass in all the right triangles and zonotopes corresponding to the sampled control parameter $p$ to estimate the relaxed chance constraint in \optE{}, hence the name Monte-Carlo RTD (Discrete + Relaxation).
Monte-Carlo RTD (Continuous) estimates the risk of collision in the same way as Monte-Carlo RTD (Discrete), however it searches for feasible not-at-fault plans over the continuous control parameter space $\P$ and requires IPOPT to compute a numerical gradient during only planning.
% and search $\P$ for a control parameter $p$ that satisfies the \eqref{eq: chance constraint def} as estimated with the Monte-Carlo method.
% Monte-Carlo RTD (Continuous + Relaxation) ((MCRTD(C+R)) estimates the risk of collision in the same way as Monte-Carlo RTD (Discrete + Relaxation) and determines which control parameter to use in the same way as Monte-Carlo RTD (Continuous).

We evaluate these algorithms in another set of 10 randomly generated 3-lane highway scenarios, with the same experimental setup from \ref{subsec: single_plan_sim}.
Each scenario is simulated for 50 trials resulting in a total of 500 test cases.
The results of the ablation study can be found in Table \ref{table: ablation study}.
From the ablation study, it is evident that replacing the closed-form over-approximation with Monte-Carlo sampling methods causes an 8-10$\times$ increase in the solve time due to a longer risk of collision estimation time.
Additionally\new{,} both Monte-Carlo methods achieve a roughly 20\% lower success rate than \riskmethodname{}, which is too slow to be able to achieve real-time performance.
We note that increasing the number of samples from 100 would likely increase the success rate as the estimation of the risk of collision would get more accurate. 
However, this increase in the number of samples will cause the solve time to be even slower.
% We note that reducing the number of samples they use to estimate the risk of collision would reduce this solve time, but will have a detrimental effect on the success rate of methods as they may greatly under-approximate the true risk of collision with low sample numbers.
Without the analytical gradients being provided, \riskmethodname{}(NAG) requires an average time of about 10$\times$ slower than \riskmethodname{}.
It is evident from these results that both the analytical gradient and the closed-form over-approximation play important roles in being able to generate motion plans in real time.
% that give while giving a reliable estimation the risk of collision and achieving a good success rate.

\begin{table}[!tb]
    \centering
    \begin{tabular}{|c||c|c|c|c|}
    \hline 

     \multirow{2}{*}{\textbf{Algorithm}} &  \textbf{Success} &  \textbf{Crash} & \textbf{Other} & \textbf{Solve} \\
      & \textbf{[\%]}  & \textbf{[\%]} & \textbf{Action [\%]} & \textbf{Time [s]} \\ \hline
    MCRTD(D) & 59.0 & 0.0 & 41.0 & 4.781 \\ \hline
    % MCRTD(D+R) & 81.0 & 1.9 & 17.1 & 6.20  \\\hline
    MCRTD(C) & 60.0 & 0.0 & 40.0 & 3.334  \\\hline
    % MCRTD(C+R) & 82.1 & 1.2 & 16.7 & 8.30  \\\hline
    \riskmethodname{}(NAG) & \textbf{80.0} & 0.0 & \textbf{20.0} & 4.931 \\\hline
    \riskmethodname{} & \textbf{80.0} & 0.0 & \textbf{20.0} & \textbf{0.412}  \\\hline
    
    % \methodname & 27\% & 0\% & 73\% & 16.5927[m/s] &  (0.2537[s], 1.3042[s])  \\ \hline
    \end{tabular}
    \caption{Single planning iteration results using the \riskmethodname{} variants considered in the ablation study.
    ``Other Action" encompasses the trials where each method did not complete the lane change maneuver, but instead either executed a safe stop maneuver or decided to keep driving in the lane. \riskmethodname{} (No Analytical Gradient) is abbreviated as \riskmethodname{} (NAG), Monte-Carlo RTD (Discrete) is abbreviated as MCRTD(D) and Monte-Carlo RTD (Continuous) is abbreviated as MCRTD(C).
    }
    
    \label{table: ablation study}
\end{table}

\end{document}